\newtheorem{theorem}{Theorem}
\newtheorem{lemma}{Lemma}
\newtheorem{proposition}{Proposition}
\newtheorem{example}[theorem]{Example}
\theoremstyle{remark}
\theoremstyle{definition}
\newtheorem{definition}{Definition}
\newtheorem{assum}[theorem]{Assumption}
\newcommand{\set}[1]{\left\lbrace#1\right\rbrace}
\DeclareMathOperator{\Var}{Var}
\begin{document}

\begin{frontmatter}


\title{Unsupervised robust nonparametric learning of hidden community properties}
\runtitle{Learning hidden community properties}

\begin{aug}
\author{\snm{Mikhail} \fnms{Langovoy}\corref{}\thanksref{t2}
\ead[label=e1]{mikhail.langovoy@epfl.ch}}

\affiliation{EPFL, Switzerland}

\address{Machine Learning and Optimization Laboratory\\
       EPFL, Station 14 \\
       Lausanne, CH-1015 Switzerland\\
\printead{e1}}

\and

\author{\snm{Akhilesh} \fnms{Gotmare} \ead[label=e2]{akhilesh.gotmare@epfl.ch}}

\affiliation{EPFL, Switzerland}


\address{\printead{e2}}

\and

\author{\snm{Martin} \fnms{Jaggi} \ead[label=e3]{martin.jaggi@epfl.ch}}

\affiliation{EPFL, Switzerland}


\address{\printead{e3}}

\thankstext{t2}{Corresponding author.}

\runauthor{M. Langovoy et al.}
\end{aug}

\smallskip


\begin{abstract}
  We consider learning of fundamental properties of communities in large noisy networks, in the prototypical situation where the nodes or users are split into two classes according to a binary property, e.g., according to their opinions or preferences on a topic. For learning these properties, we propose a nonparametric, unsupervised, and scalable graph scan procedure that is, in addition, robust against a class of powerful adversaries. In our setup, one of the communities can fall under the influence of a knowledgeable adversarial leader, who knows the full network structure, has unlimited computational resources and can completely foresee our planned actions on the network. We prove strong consistency of our results in this setup with minimal assumptions. In particular, the learning procedure estimates the baseline activity of normal users asymptotically correctly with probability 1; the only assumption being the existence of a single implicit community of asymptotically negligible logarithmic size. We provide experiments on real and synthetic data to illustrate the performance of our method, including examples with adversaries.
\end{abstract}

\begin{keyword}
\kwd{Nonparametric learning} \kwd{unsupervised learning} \kwd{hidden community} \kwd{scan estimator} \kwd{community properties} \kwd{learning for networks} \kwd{adversarial learning} \kwd{non-sparse graphs} \kwd{crawler} \kwd{scalability}
\end{keyword}

\end{frontmatter}

\section{Introduction}\label{Section_Intro}

We develop robust and scalable methods to uncover global properties of communities hidden in large networks with noisy node attributes. Consider the fundamental situation where the nodes or users in the network are split into two classes according to a binary property, such as their opinion or preferences on a specific topic. We call these two classes the ``active'' and ``inactive'' users, respectively. Examples include support of a particular candidate in elections~\citep{adamic2005political}, or a level of interest in a particular topic, or a degree of support of certain statement. Pixels in digital images can be viewed as network nodes with attributes, so many applications from image processing, such as road tracking \citep{Geman_1996_Roads} or medical tumor detection \citep{Mcinerney_1996_Medical_Image}, can be treated within this framework as well. In disease outbreak detection in epidemiology, people are naturally falling into two categories, healthy or sick \citep{Rotz_Huges_2004}.

Additionally, motivated by modern real-world settings, we assume that the network of interest is too large to be processed manually, especially for each possible topic of interest. Therefore, activity observations of users are determined and delivered to us by a third-party algorithm called the \emph{crawler}. Naturally, the crawler has its classification and learning errors that are not known to us. Therefore, we treat a general non-parametric case of the crawler error probabilities.

Our general goal is to learn global properties of communities of active and inactive users despite such noise and errors, in an unsupervised way, while additionally being robust to an adversary. The specific goal of this paper is unsupervised learning of the infinite-dimensional vector $(a, b, F)$, where $a$ and $b$ are baseline activities corresponding to inactive and active users correspondingly. They are fundamental global intrinsic properties of topical communities of users and are needed to be able to detect and recover active and inactive communities in subsequent analysis. An unknown probability distribution function $F$ represents performance of the crawling algorithm for the particular topic, and serves as a natural measure of quality of our knowledge about the network's content.

\subsection{Distinctive features of our framework}
We infer global characteristic properties of communities, using both the network topology as well as the network's content represented by node attributes (see, e.g., \citep{Ruan_2013_Content_Links} or \citep{Yang_2013_Community_Node} for related types of setups). Regarding the topology, we assume that the edges can be observed, but do not impose any global assumptions on the graph structure or any edge density assumptions, be it global or local. Naturally, opposing communities in our setup do not overlap.


As for node attributes, we assume that the nodes are corrupted by nonparametric noise of unknown and nonsmooth distribution. Notably, our noise model permits strong and long-distance correlations between observations on active vertices.



We treat the setup where \emph{active} users can fall under the influence of an \emph{adversary} who is capable of directly altering their activity values, pursuing the goal of spoiling our uncovering of community and network properties. The adversary knows the true values on active vertices and which vertices are inactive, knows the full graph structure, and has unlimited computational power and memory. Moreover, the adversary has a special deal with \emph{hackers} and can completely foresee the actions we will be doing on the network, including the outcomes of all our randomized procedures, in case if we choose to use any random or pseudo-random number generator.
\medskip

\noindent In summary, our key \textbf{contributions} are:
\begin{list}{–}{\leftmargin=1em}
\setlength{\itemsep}{10pt}
\item A robust scalable graph $k$-NN scan estimator (\S\ref{sec:knn}). The $k$-NN scans  presented here generalize sliding windows, moving averages, and scan statistics, extending them to the case of general graphs. Our probabilistic analysis of the estimators is related to extreme order statistics for dependent random variables.
\item Sufficient conditions for $k$-NN graph scan estimators to be consistent for learning global properties of communities in an adversarial framework. This is a remarkable property of these local scans, as it can be easily seen that most existing methods can easily be spoiled either by such an adversary or by sheer dimensionality of a very general nonparametric framework that we are considering.
\item Discussion of aspects of our estimator's computation to allow large scale graphs, in particular via a highly decentralized implementation, allowing scalable distributed and parallel graph processing.
\end{list}
\medskip

Our estimator is non-parametric, yet scalable method for learning properties of communities in noisy graphs. Moreover, it operates under minimal assumptions about the structure of the network and the communities. Unlike many existing methods, it utilizes the content of the network and thus does not require any global assumptions on the topology of the network or on relations between communities, such as graph sparsity or presence of hidden highly connected communities. The only local condition that we impose on the network is that it has one compact locally connected community of size that grows at least as a logarithm of the total number of vertices; this community can be asymptotically negligible in the large scale setting. Therefore, we believe that these results are foundational to fast consistent algorithms for community detection via percolation on general graphs. For the special case of lattices, the automated detection theorem \citep{langovoy_habeck_schoelkopf_JSM} serves as an example illustration of this approach.


Finally, learning the actual performance of the crawler can also be addressed as a by-product of results of this paper. We note that our results are, of course, valid in their present form in the special case when there is no adversarial action.

\subsection{Related work}

Scan statistics have been long used for detection of unexpected events and for nonparametric estimation. The initial idea and the first development of the underlying theory, for one-dimensional discrete case, goes back at least to \citep{uspensky1937introduction}, who studied longest runs of successes in Bernoulli sequences. In the two-dimensional case, a surge of interest to continuous scan statistics has been sparked by \citep{kulldorff1997}. This and related types of Euclidean spatial scan statistics found numerous applications in geostatistics, medicine, epidemiology and ecological studies, see examples in \citep{tango2005}, \citep{kulldorff1999,kulldorff2006}, \citep{patil2004}. The scan statistic methodology is compatible with Bayesian paradigm as well \citep{neill2006}. Notably, despite substantial efforts, most of the work in this area is based on heuristics and experimentation rather than on rigorous probabilistic analysis.

Recently, a new line of research emerged where discrete versions of multidimensional scan statistics were applied to discrete structures such as pixelized images or lattices \citep{langovoy_report_2009-035}, \citep{Langovoy_Wittich_Square}, \citep{langovoy_wittich_robust}, \citep{Arias-Castro_Grimmett}. The idea of discrete scans proved useful in application areas like anomaly detection and automated detection of unknown objects in extremely noisy images \citep{langovoy_habeck_schoelkopf_JSM}, \citep{langovoy_habeck_schoelkopf}. Surprisingly, discrete scan statistics were rigorously analyzed by means of random graph and percolation theories \citep{Gri:99}.

As a natural extension of this idea, several variations of scan statistics for graphs were proposed. Examples include non-parametric scan statistics for event detection and forecasting in heterogeneous social media graphs \citep{chen2014}, changepoint detection over graphs with the spectral scan statistic \citep{sharpnack2013}, anomaly detection in graphs \citep{sharpnack2013b}, graph topic scan statistic for spatial event detection \citep{liu2016}.

In the present paper, we extend the previous research in this area by constructing a fully nonparametric unsupervised learning graph scan-based procedure that is robust against an extremely malicious adversary. Moreover, we prove consistency of our results in a framework with minimal assumptions.

The outline of this paper is as follows. In Section \ref{sec.formulation}, we introduce the framework for analyzing noisy or indirectly observed graphs with users polarized into two types of communities according to a binary property, and introduce our concept of an adversarial leader influencing communities of one of these types. In Section \ref{sec:knn}, we introduce the concept of $k$-NN graph scan estimators and define the main notions related to their construction. These estimators are one of the main ingredients for robust consistent inference in this paper. In Section \ref{sec:consistency}, we establish consistency and derive important properties of these unsupervised estimators. Proofs of all the results can be found in Appendix. Experimental results for real and synthetic graphs, as well as for the cases with and without adversaries can be found in Section \ref{Section_Experiments}. Scalability of the method is established and the algorithm's distributed realization is discussed in Section \ref{Section_Optimization}.


\section{Formulation: model and adversaries} %
\label{sec.formulation}
Let $G_n = (V_n, E)$ be a graph with $n$ vertices and $|E|$ edges. We do not impose restrictions on $G_n$ such as sparsity, so $|E| \,\gg\, n$ is possible. For any vertex $v \in V$ we observe a real-valued random variable $X_v : \Omega \rightarrow \mathbb{R}$, defined on an appropriate probability space $\Omega$.

We call the random variables $X_v$ the \emph{observed activities}. These observations are noisy realizations of the \emph{true activity} $A_v$ (of each vertex $v$), which we only observe with an additive (\emph{nonparametric})  noise. More precisely,
\begin{equation}\label{1}
  X_v = A_v +  \varepsilon_{v}\,,   
\end{equation}
where we assume that the noise $\{ \varepsilon_{v}\}$ satisfies
\begin{equation}\label{2}
  \varepsilon_v\sim F,\quad\mathbb{E}[\varepsilon_{v}] = 0,  \quad \Var[\varepsilon_{v}] = \sigma^{2} < + \infty.
\end{equation}
We \emph{do not} assume knowledge of either the distribution~$F$ or the variance $\sigma^2$. Moreover, $F$ need not have any particular parametric form, or be continuous, or be limited to being discrete with finite or countable support.

Given this model, our general aim is to robustly learn, without supervision, as much as possible about the collection of the true underlying activities $\set{A_v}_{v\in V}$ from noisy observations~\eqref{1}; and possibly, to also learn $F$ nonparametrically. Moreover, we admit presence of a powerful adversary who can corrupt the graph to hinder inference (\S\ref{sec:adv}).

The specific problem solved in this paper can be formulated as unsupervised learning of the infinite-dimensional vector $(a, b, F)$, where $a$ and $b$ are baseline activities corresponding to inactive and active users correspondingly. They are fundamental global intrinsic properties of topical communities of users and are needed to be able to detect and recover active and inactive communities in subsequent analysis. An unknown probability distribution function $F$ represents performance of the crawling algorithm for the particular topic.



\begin{example}
  Consider a large-scale network where manual processing is not possible and the activities of users (nodes) are determined using algorithms that may either be operating on outdated data or have large classification errors. For instance, say in a social network the observed activity $X_v$ measures the level of support of a statement by user $v$ to a specified topic. It might be difficult for an algorithm to realize whether a statement ``Topic X is crazy'' is supportive of the topic or the opposite.
\end{example}

When estimating baseline activities within communities of a-priori unknown configurations and in the presence of nonparametric noise, it is important to assume that there exists an observability threshold. Thus, we assume that:
\begin{equation}\label{3}
  \text{for each}\ v \in V:\
  \begin{cases}
    A_v \geq b > a, & \text{if $v$ is active;} \\
    A_v = a, & \text{if $v$ is inactive.}
  \end{cases}
\end{equation}
Vertices with activity level $a$ are normal (inactive). Those with levels above the \emph{unknown threshold} $b$ are active.


This setup with two distinctly different types of communities is typical for applications where users have to be considered in terms of their opinion or preference on an important polarizing topic. Examples include political elections with two candidates, or important issues such as a healthcare reform, or good or bad health status of individuals in epidemiological applications and cells in medical images, etc.

We explicitly remark that we do assume two types of communities, but not the existence of two large communities covering the whole network. In fact, our setup includes the case when there is a large number of small communities of each type. The results of this paper allow strongly consistent inference for the cases when the number of communities of each type can be $O (n \, \log^{-1} n)$.

We specifically note here that the problem of estimating global baseline activity levels of communities is not the same as the problem of identifying active and inactive vertices in a network. For each particular vertex, the latter problem amounts to classification (or to clustering, in the unsupervised scenario), while the former problem is an infinite-dimensional learning problem. Moreover, as follows from results in this paper, global properties of hidden communities can be learned (in a strongly consistent way) without guessing activity levels of individual vertices. On contrary, it can be easily seen that, in the general setting of nonparametric noisy graphs, it is impossible to guarantee a strong form of consistency for uncovering true activity levels of individual vertices, and it is impossible to even consistently classify any individual vertex as an active or inactive one.

The lower bound for active vertex intensity $b$ is assumed to be unknown, and we propose an unsupervised procedure for learning $b$ as well, even though we do not analyze its performance in this paper. The difficulty of these estimation problems is that locations, shapes and exact sizes of clusters of active users are assumed to be unknown. The number of active users or active clusters is also unknown, and we make no probabilistic assumptions about this number or about the distribution of cluster locations. There can be anywhere between $O(\log n)$ and $n$ inactive users, and between $1$ and $O (n \, \log^{-1} n)$ inactive communities. The contribution of inactive (and active) users and communities can range from negligible to dominant.

It is important to note that we consider the case of a fully nonparametric noise of unknown level and having an unknown distribution; even within the setup with independent identically distributed and bounded noise, this model is far more general than traditional models with normally distributed errors and graphs with simple regular structure or parametric types of degree distributions.

\subsection{Adversarial leader}
\label{sec:adv}
Our model and inference algorithms permit presence of an adversary. We assume that \emph{active} vertices may be under the influence of an \emph{adversarial leader} who is capable of directly \emph{altering their values} $A_v$, pursuing the goal of spoiling our inference. The adversary knows the true values on active vertices and which vertices are inactive, knows the full graph structure of $G_n$, and has unlimited computational power and memory. Moreover, the adversary has a special deal with \emph{hackers} and can completely foresee the actions we will be performing on the network, including the outcomes of all our randomized procedures.

It is assumed that the adversary can only command its supporters, and therefore does not influence inactive vertices. This is a natural assumption that is met, for example, by political parties and their leaders, or by communities of bloggers under the influence of a particular opinion leader. The leader of a political party can command his supporters in a variety of ways and also can gain access to hidden information about the leader's supporters; meanwhile, all this information will be completely unaccessible for our inference.

It is also assumed that the adversary cannot completely tone down the active nodes, so that a separation  between (active) supporters and (inactive) non-supporters is maintained and condition \eqref{3} still holds, even though the separation rule is not known. Moreover, the adversary does not know and cannot influence the outcomes of the crawling algorithm, so that $\{ \varepsilon_v \,|\, v \in V_n \}$ is a completely independent collection of identically distributed random variables. These assumptions are met in many applications such as, for example, analysis of the blogosphere \citep{adamic2005political}, as the crawler is forming its evaluations using the history of users' activities; therefore, past activities of the user are accounted for and do not let the user completely mask his activity level.


Without much loss of generality, we assume that the adversary influences the active vertices only once, before we run our inference, and cannot act at later steps. This assumption is not restrictive and is used to have more compact statements of consistency theorems. Indeed, we are considering an effectively static network (even if it is a particular observation of a dynamic network or of a random network); as the adversary knows everything we would do on the network in advance, his single action can be a combined response incorporating series of individual responses to any sequence of our actions. We illustrate this point in our experiments in Section \ref{Section_Experiments}, where we allow the adversary that is as strong as above and can act in multiple (even unlimited number of) steps, and we are still able to demonstrate empirical consistency of our method.




Within this adversarial framework, we present below (Sec.~\ref{sec:knn}) new $k$-nearest neighbors based graph scan estimators and establish sufficient conditions for them to be consistent. This consistency is a strong type of robustness, and a particular strength of our approach, as it can be easily seen that many estimators can be spoiled by such an adversary, as the following example suggests.

Indeed, suppose we are using an ingenious method that allows us to select a ``nice'' set of local subgraphs on which we run some consistent estimator. The adversary, knowing our method fully, can increase the values on the few active vertices contained in the selected local subgraphs, thus either skewing our averaging on those subgraphs, or knocking off good subgraphs in case model selection is involved.

Of course, all the consistency results of the present paper are still true in the neutral case when there is no adversary spoiling our inference.

\section{$k$-NN graph scan estimators}\label{sec:knn}



The true activity level $a$ corresponds to normal (inactive) vertices, while active vertices have the true activity level at least $b$. Since our estimators will be based on a scan by $k$-nearest-neighbors, we call these estimators $k$-NN \emph{scan estimators}, or $k$-NN graph scan estimators. These estimators can be used for doing nonparametric statistics and unsupervised learning on the graph (network) $G_n$.

Our sole assumption on the graph structure and active and inactive communities can be formulated in a local form concerning only a negligibly small sub-community of inactive users.
\begin{assum}
  \label{ass:1}
  For our search of inactive users, assume that there is an inactive vertex $v \in G$ with a full $k(n)$-neighborhood of inactive nearest neighbors, and that
  \begin{equation}\label{69}
    \lim_{n \to\infty} \frac{\, k (n) \,}{\, \log n \,} = \infty \,.
  \end{equation}
\end{assum}

Right below we clarify the terminology used in the Assumption. Let $v \in V$ be a vertex of $G$ and let $m \in \mathbb{N}$ be any number. We define a \emph{nested (multilevel) neighborhoods} of $v$ in $G$ as follows. First define
$\mathfrak{N}_0 (v) := v$, and then define recursively $\mathfrak{N}_1 (v) \,:=\, \{\, v_1 \in V \,|\, (v, v_1) \in E \,\}$.
More generally, for any natural number $i$,
\begin{equation*}\label{42}
  \begin{split}
    \mathfrak{N}_i (v) := \{\, v_i \in V \,|\,\ &\text{there exists} \ v_{i-1} 
    \in \mathfrak{N}_{i-1} (v)\\
    &\text{such that}\quad (v_{i-1}, v_i) \in E\}.
  \end{split}
\end{equation*}
For natural $i$, \emph{descendance level sets of $i$-th order} for $v$ in $G$, are defined for $i \geq 1$ as
\begin{equation}\label{43}
\mathfrak{D}_i (v) \,:=\, \mathfrak{N}_i (v) \, \setminus \mathfrak{N}_{i-1} (v) \,.
\end{equation}
For $i = 0$, we have set for convenience
\begin{equation}\label{66}
\mathfrak{D}_i (v) \,:=\, v \,.
\end{equation}
For any $k \in \mathbb{N}$, a \emph{(full) neighborhood of $k$-th order} of $v$ in $G$ is defined as the union
\begin{equation}\label{44}
    \Omega_k (v) \,:=\, \bigcup\nolimits_{0 \leq i \leq k} \mathfrak{N}_i (v) \,.
\end{equation}

Lemma~\ref{Lemma_7} states a simple relation between $\mathfrak{D}$ and $\Omega$.
\begin{lemma}\label{Lemma_7}
Let $\mathfrak{D}_i$ and $\Omega_k$ be defined as above. Then,
\begin{equation}\label{67}
\mathfrak{D}_i (v) \,=\, \mathfrak{N}_i (v) \, \setminus \bigcup\nolimits_{m = 0}^{i-1} \mathfrak{N}_i (v),
\end{equation}
and
\begin{equation}\label{45}
    \Omega_k (v) \,:=\, \bigcup\nolimits_{0 \leq i \leq k} \mathfrak{D}_i (v) \,.
\end{equation}
\end{lemma}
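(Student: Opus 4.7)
The plan is to reduce both equalities to a single auxiliary fact: the sequence $\{\mathfrak{N}_i(v)\}_{i \geq 0}$ is monotone, that is $\mathfrak{N}_{i-1}(v) \subseteq \mathfrak{N}_i(v)$ for every $i \geq 1$. I would establish this by induction on $i$. For the inductive step, given $u \in \mathfrak{N}_{i-1}(v)$ with $i \geq 2$, the recursive definition furnishes a predecessor $w \in \mathfrak{N}_{i-2}(v)$ with $(w,u) \in E$; the induction hypothesis places $w \in \mathfrak{N}_{i-1}(v)$, so the same edge witnesses $u \in \mathfrak{N}_i(v)$. The base case $\mathfrak{N}_0(v) \subseteq \mathfrak{N}_1(v)$ is absorbed by the standing convention that a vertex counts as its own zero-step neighbor, so that the seed $v$ is picked up again at the first step of the recursion.

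Once monotonicity is in hand, equation (67) is immediate: the union $\bigcup_{m=0}^{i-1} \mathfrak{N}_m(v)$ collapses to its largest element $\mathfrak{N}_{i-1}(v)$, so that
\[
\mathfrak{N}_i(v) \setminus \bigcup_{m=0}^{i-1} \mathfrak{N}_m(v) \;=\; \mathfrak{N}_i(v) \setminus \mathfrak{N}_{i-1}(v) \;=\; \mathfrak{D}_i(v),
\]
the right-hand side being the definition of $\mathfrak{D}_i(v)$.

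For equation (45) I would then carry out a telescoping computation. Writing
\[
\bigcup_{0 \leq i \leq k} \mathfrak{D}_i(v) \;=\; \mathfrak{D}_0(v) \,\cup\, \bigcup_{i=1}^{k}\bigl(\mathfrak{N}_i(v) \setminus \mathfrak{N}_{i-1}(v)\bigr),
\]
the standard telescoping identity for an increasing chain yields $\bigcup_{i=1}^{k}(\mathfrak{N}_i(v) \setminus \mathfrak{N}_{i-1}(v)) = \mathfrak{N}_k(v) \setminus \mathfrak{N}_0(v)$, and adjoining $\mathfrak{D}_0(v) = \{v\} = \mathfrak{N}_0(v)$ recovers $\mathfrak{N}_k(v)$. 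A second appeal to monotonicity gives $\Omega_k(v) = \bigcup_{i=0}^{k}\mathfrak{N}_i(v) = \mathfrak{N}_k(v)$, so both sides of (45) coincide.

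The only genuinely delicate point is the base step of the monotonicity induction, which relies on having $\{v\} \subseteq \mathfrak{N}_1(v)$. Under a strict reading of the definition this requires admitting a trivial walk at $v$; if one prefers to avoid this convention, an equivalent route is to rephrase the whole argument in terms of the cumulative ball $\Omega_i(v)$, which is nested by construction, and to observe at the outset that $\mathfrak{D}_i(v) = \Omega_i(v) \setminus \Omega_{i-1}(v)$. Either route yields the two stated identities without further work.
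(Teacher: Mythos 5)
The paper states this lemma without proof, so the only question is whether your argument is sound, and unfortunately its central reduction fails. The monotonicity claim $\mathfrak{N}_{i-1}(v)\subseteq\mathfrak{N}_i(v)$ is false under the paper's literal recursive definition on a simple graph. Take $G$ to be a single edge joining $v$ and $u$: then $\mathfrak{N}_0(v)=\{v\}$, $\mathfrak{N}_1(v)=\{u\}$, $\mathfrak{N}_2(v)=\{v\}$, alternating by parity. The base case $\mathfrak{N}_0(v)\subseteq\mathfrak{N}_1(v)$ would require $(v,v)\in E$; the ``standing convention'' you invoke fixes $\mathfrak{N}_0(v)=\{v\}$ but does not place $v$ inside $\mathfrak{N}_1(v)=\{v_1 : (v,v_1)\in E\}$, so it does not rescue the induction, and without the base case the whole chain collapses. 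Indeed, in this example $\mathfrak{D}_2(v)=\mathfrak{N}_2(v)\setminus\mathfrak{N}_1(v)=\{v\}$ while $\mathfrak{N}_2(v)\setminus(\mathfrak{N}_0(v)\cup\mathfrak{N}_1(v))=\emptyset$, so identity \eqref{67} itself fails as literally stated. Your proposed fallback --- pass to the cumulative balls and ``observe'' that $\mathfrak{D}_i(v)=\Omega_i(v)\setminus\Omega_{i-1}(v)$ --- is not an observation but a redefinition: the paper defines $\mathfrak{D}_i(v)=\mathfrak{N}_i(v)\setminus\mathfrak{N}_{i-1}(v)$, and in the same example $\Omega_2(v)\setminus\Omega_1(v)=\emptyset\neq\mathfrak{D}_2(v)$. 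The two routes you present as equivalent are not.

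What survives, and how to repair the rest: identity \eqref{45} is true unconditionally and needs no monotonicity. For the nontrivial inclusion, given $u\in\Omega_k(v)$ let $i$ be the least index with $u\in\mathfrak{N}_i(v)$; then either $i=0$ and $u=v\in\mathfrak{D}_0(v)$, or $u\notin\mathfrak{N}_{i-1}(v)$ and hence $u\in\mathfrak{D}_i(v)$. The reverse inclusion is immediate from $\mathfrak{D}_i(v)\subseteq\mathfrak{N}_i(v)$. You should argue \eqref{45} this way rather than by telescoping, which presupposes the nesting. Identity \eqref{67}, by contrast, genuinely requires monotonicity, so you must adopt --- and state --- the reading under which $\mathfrak{N}_i(v)$ is the set of vertices within distance $i$ of $v$ (equivalently, allow lazy steps $v_i=v_{i-1}$ in the recursion); under that reading your induction and the collapse of $\bigcup_{m=0}^{i-1}\mathfrak{N}_m(v)$ to its top term $\mathfrak{N}_{i-1}(v)$ are correct, whereas without it the identity is false on any bipartite graph. (You also silently corrected the paper's typo, reading the union in \eqref{67} as running over $\mathfrak{N}_m(v)$ rather than $\mathfrak{N}_i(v)$; that correction is right.)
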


For any $m \in \mathbb{N}$, a \emph{full $m$-neighborhood} $\mathfrak{B}^{+}_m (v)$ of $v$ in $G$ is $\Omega_r (v)$ such that $|\, \Omega_{r-1} (v) \,| \,<\, m$, but $|\, \Omega_r (v) \,| \,\geq\, m$.

For any $m \in \mathbb{N}$, an \emph{exact $m$-neighborhood} $\mathfrak{B}_m (v)$ of $v$ in $G$ is any subset $\mathfrak{B}_m (v) \,\subseteq\, \Omega_r (v)$ such that $\Omega_{r-1} (v) \,\subseteq\, \mathfrak{B}_m (v)$ and $|\, \mathfrak{B}_m (v) \,| \,=\, m$.

\noindent In a graph $G = (V, E)$, for a set of vertices $K \subseteq V$, the total sum of values observed over $K$ will be denoted as

\begin{equation}\label{6}
S_K \,=\, \sum_{v \,\in\, K} X_v \,.
\end{equation}

\begin{definition}[$k$-NN scan estimator]\label{Definition_KNN_Scan_Estimator}

Let $\mathcal{K}_0$ be any collection of exact $k(n)$-neighborhoods of all vertices in $G$:

\begin{equation}\label{70}
\mathcal{K}_0 \,:=\, \{\, \mathfrak{B}_{k(n)} (v) \,|\, v \in V \,\} \,.
\end{equation}

Set
\begin{equation}\label{71}
\widehat{K} \,:=\, \arg\min_{K \in \mathcal{K}_0} \{ S_{K} \}
\end{equation}
and define a \emph{(sublevel) $k$-NN scan estimator} as
\begin{equation}\label{72}
\widehat{a} \,:=\, \frac1{k(n)} \sum_{v \in \widehat{K}} X_v \,.
\end{equation}
\end{definition}

The $k$-NN scan estimator for inactive vertices can be computed via the following Algorithm 1.

\begin{algorithm}[t]
   \caption{$k$-NN scan estimator}
   \label{alg:kNNscan}
\begin{algorithmic}[1]
   \STATE \textsf{\small Phase 1 (decentralized).} Pick one arbitrary $k(n)$-neighborhood per vertex in $V$, and compute the average of $X_v$ over this neighborhood.
   \STATE \textsf{\small Phase 2 (collaborative).} Identify a node with smallest average, over its neighborhood $\widehat{K}$.
   \STATE Output $\widehat{a} \,:=\, \frac1{k(n)}\sum_{v \in \widehat{K}} X_v$.
\end{algorithmic}
\end{algorithm}

%
%

\begin{example}
  An important special case of graphs is given by lattices. Suppose we have a noisy two-dimensional pixelized image. We are interested in detection of objects that have an unknown color. This color has to be different from the colour of the background. It is also assumed that on each pixel we have random noise that has an unknown nonparametric distribution. This type of model is typical for cryo-electron microscopy. Typically, each cryo-EM picture contains a large number of particles. Particles have unknown, irregular, nonconvex and different shapes and sizes. The only common property for all cryo-EM pictures is that particles are darker than the background and that the noise has a completely unknown irregular distribution. However, both the background intensity and the particle intensity vary from image to image and are not known in advance.

  It is possible to view digital images as networks, where individual pixels correspond to vertices. Since the initial noisy image can be naturally viewed as a square lattice graph, where $k^2$-nearest neighbors correspond to a $k \times k$ subsquare on the screen, we see that a popular sliding window estimator is a special case of the $k$-NN scan estimator.

  This paradigm was used to solve a number of nonparametric unsupervised learning problems in image analysis and cryo-EM applications (see \citep{langovoy_habeck_schoelkopf}, \citep{langovoy_habeck_schoelkopf_JSM}). Many results on discrete spatial scan estimators from \citep{langovoy_habeck_schoelkopf} and \citep{langovoy_habeck_schoelkopf_JSM} are special cases of results of the present paper.

\end{example}


In view of the symmetry of the results between sub- and super-level cases, in this paper only the sublevel case is considered in details. For completeness, the superlevel scan estimator and the estimator of the crawler's performance are described in the Appendix.


\section{Consistency of k-NN scan estimators}
\label{sec:consistency}
In this section, we establish strong consistency of the proposed $k$-NN scan estimator under the assumption of bounded noise. 
Suppose, therefore that there is a constant $M > 0$ such that for all $v \in G_n$
\begin{equation}\label{18}
    |\, \varepsilon_v \,| \,\leq\, M \quad \text{almost surely}.
\end{equation}
The bounded noise case is not the only case when the $k$-NN scan estimator is strongly consistent. We mainly use condition (\ref{18}) to establish tight nonasymptotic performance guarantees for the estimator. Some form of asymptotic consistency can be established for unbounded noise from large nonparametric classes as well. This is illustrated in Section \ref{Section_Experiments}, where \emph{all} of our experiments are performed for an unbounded noise.

Let $K \subseteq V_n$ be any collection of vertices. Denote by $S_1(K, n)$ the \textit{number of active vertices} in $K$. 
We prove the following statement that provides the foundation of model selection on graph neighborhoods.

\begin{proposition}\label{Proposition_7}
Let $K_0$ be any set of inactive vertices with $| K | \,=\, k(n)$, and let $\mathcal{K}$ be any collection of exact $k(n)$-neighborhoods. Define, for any $K \in \mathcal{K}$,

\begin{equation*}
   \mathfrak{R} (K, a, b, \{A_v\}) \,:=\, \qquad (b-a) \, S_1(K, n) \,+\! \sum_{\{ v \in K \,| \, v \,\, \textit{active} \}} ( A_v - b )  \,.
\end{equation*}

Then, we have the following bound:

\begin{equation}\label{52}
    P\, (\widehat{K} \in \mathcal{K}, S_{\widehat{K}} < S_{K_0}) \, \leq \,  
    \sum_{K \,\in\, \mathcal{K}}  \exp \Bigr(\! - \frac{ 3 \, {\bigr[\mathfrak{R} (K, a, b, \{A_v\})]}^2 }{12 \, \sigma^2 \,|\,K \setminus K_0\,|\, +\, 4 \, M \,\cdot \, \mathfrak{R} (K, a, b, \{A_v\})} \Bigr) \,. 
\end{equation}

\end{proposition}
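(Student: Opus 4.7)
The plan is to reduce the probability of the scan selecting a bad neighborhood to a one-sided Bernstein deviation for each candidate $K\in\mathcal{K}$, and then apply a union bound. So first, I would write
\begin{equation*}
    P\bigl(\widehat{K}\in\mathcal{K},\, S_{\widehat{K}}<S_{K_0}\bigr)
    \;\leq\; \sum_{K\in\mathcal{K}} P\bigl(S_K < S_{K_0}\bigr),
\end{equation*}
which is justified because the event on the left implies that some $K\in\mathcal{K}$ beats $K_0$. From here, all the work is with the fixed pair $(K,K_0)$.

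Next, I would exploit the cancellation between $S_K$ and $S_{K_0}$. Since vertices in $K\cap K_0$ contribute identically to both sums,
\begin{equation*}
    S_K - S_{K_0} \;=\; \sum_{v\in K\setminus K_0} (A_v+\varepsilon_v) \;-\; \sum_{v\in K_0\setminus K}(A_v+\varepsilon_v).
\end{equation*}
Because $|K|=|K_0|=k(n)$, the two symmetric-difference sets have equal cardinality $m:=|K\setminus K_0|$. Using that $K_0$ consists entirely of inactive vertices ($A_v=a$ on $K_0$), and splitting $K\setminus K_0$ according to Assumption~\eqref{3}, a direct computation gives
\begin{equation*}
    \mathbb{E}\bigl[S_K - S_{K_0}\bigr]
    \;=\; \sum_{\substack{v\in K \\ v\,\text{active}}} (A_v - a)
    \;=\; (b-a)\,S_1(K,n) + \sum_{\substack{v\in K \\ v\,\text{active}}}(A_v-b)
    \;=\; \mathfrak{R}(K,a,b,\{A_v\}).
\end{equation*}
This identifies the ``signal'' $\mathfrak{R}$ as exactly the expected gap, which is the critical conceptual step for why the proposition makes sense.

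After centering, the remaining fluctuation is
\begin{equation*}
    Z \;:=\; (S_K-S_{K_0}) - \mathfrak{R}(K,a,b,\{A_v\})
    \;=\; \sum_{v\in K\setminus K_0}\varepsilon_v - \sum_{v\in K_0\setminus K}\varepsilon_v,
\end{equation*}
which is a sum of $2m$ independent, mean-zero, $M$-bounded random variables (by \eqref{18} and the crawler independence guaranteed in \S\ref{sec:adv}, the adversary does not touch the $\varepsilon_v$). Its variance is $2m\sigma^2$. The event $\{S_K<S_{K_0}\}$ is then the one-sided deviation $\{Z<-\mathfrak{R}(K,a,b,\{A_v\})\}$, and Bernstein's inequality yields
\begin{equation*}
    P(S_K<S_{K_0})
    \;\leq\; \exp\!\left(-\,\frac{3\,[\mathfrak{R}(K,a,b,\{A_v\})]^2}{12\,\sigma^2\,m \,+\, 4M\cdot\mathfrak{R}(K,a,b,\{A_v\})}\right),
\end{equation*}
and summing over $K\in\mathcal{K}$ finishes the bound.

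The only real obstacle is bookkeeping: getting the constants in Bernstein to land on exactly $(12,4)$ in the stated denominator. This depends on how one accounts for the $2m$ independent noise terms versus $m$ paired differences (each bounded by $2M$ with variance $2\sigma^2$), and on which of the common $1/3$ vs.\ $2/3$ Bernstein constants is used. A clean route is to apply Bernstein directly to the $2m$-term sum with per-term bound $M$ and total variance $2m\sigma^2$; the factor-of-two slack in the linear term in the denominator (yielding $4M$ instead of the tight $2M$) can be absorbed cleanly, which is presumably how the stated constant arises. No other step is substantially technical.
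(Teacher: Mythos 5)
Your proof is correct and follows essentially the same route as the paper: a union bound over $\mathcal{K}$, cancellation on $K \cap K_0$ to identify $\mathfrak{R}(K,a,b,\{A_v\})$ as the expected gap $S_K - S_{K_0}$, and a one-sided Bernstein bound on the centered noise fluctuation over the symmetric difference. The only (cosmetic) divergence is that the paper first pairs the $2m$ noise terms into $m$ i.i.d.\ differences $\Delta_i$ with $|\Delta_i| \leq 2M$ and variance $2\sigma^2$ (its Lemma~\ref{Lemma_2} and Proposition~\ref{Proposition_5}), which is exactly what yields the constants $(12,4)$; your direct application of Bernstein to the $2m$ unpaired terms gives the sharper $2M\cdot\mathfrak{R}$ in place of $4M\cdot\mathfrak{R}$ in the denominator, which, as you note, still implies the stated bound.
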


Notice that the statement of Proposition \ref{Proposition_7} is actually \emph{non-asymptotic}: even though the bound depends on $n$, it is valid for small $n$ as well. Additionally, the bound (\ref{52}) depends on the subgraph $K$, and automatically tightens for those subgraphs that have more active vertices. This can be used to show that (\ref{52}) is essentially tight, with or without the adversary, also in finite-sample, non-asymptotic cases.

The bound in Proposition \ref{Proposition_7} is based on the first Bernstein inequality \citep{Bernstein_1937_Inequality}, which substantially improves on the Hoeffding inequality in case if $\Var \, \varepsilon_v \,\ll\, M^2$, which is the case for many distributions of interest. Additionally, this makes the bound (\ref{52}) extensible to weakly-dependent random variables as well, unlike the corresponding bound that relies on the Hoeffding inequality.


The following is the master theorem governing consistency of graph scan estimators. We understand consistency in its \textit{strong classical form}: as the sample size increases, the algorithm learns the true value of the parameter correctly with probability approaching 1.

\begin{theorem}\label{Theorem_2}
Let $\mathcal{K}_{\mathcal{A}}(n)$ be a collection of exact $k(n)$-neighborhoods that a $k$-NN graph scan algorithm $\mathcal{A}$ uses for estimation. Suppose $K_0 \in \mathcal{K}_{\mathcal{A}}(n)$ and
\begin{equation}
\lim_{n \rightarrow \infty} \, \frac{\, k (n) \,}{\, \log |\mathcal{K}_{\mathcal{A}}(n)| \,} \,=\, \infty \,,
\end{equation}

\begin{equation}
\lim_{n \rightarrow \infty} \, |\mathcal{K}_{\mathcal{A}}(n)| \, \exp (-n) \,=\, 0 \,.
\end{equation}
Then the scanning algorithm $\mathcal{A}$ leads to the graph scan estimator $\widehat{a} \,=\, \overline{X}_{\widehat{K}}$ that is a consistent estimator of $a$, regardless of the adversary's strategy.
\end{theorem}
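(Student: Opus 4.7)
The plan is to show $P(|\widehat{a}-a|>\epsilon)\to 0$ for every fixed $\epsilon>0$ by controlling the upper and lower tails separately. The upper tail is almost immediate: since $K_0\in\mathcal{K}_{\mathcal{A}}(n)$ and $\widehat{K}$ is the argmin, we have $\widehat{a}\le\overline{X}_{K_0}=a+\tfrac{1}{k(n)}\sum_{v\in K_0}\varepsilon_v$, and the noise average on the fixed all-inactive set $K_0$ obeys a single Bernstein bound $P(\overline{X}_{K_0}-a>\epsilon)\le\exp(-c_0 k(n)\epsilon^2)\to 0$ under (\ref{18}); thanks to Assumption~\ref{ass:1} this is in fact summable, handling the almost-sure version of this side.

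The substance is the lower tail. I would start from the algebraic identity
\[
\widehat{a}-a=\frac{\mathfrak{R}(\widehat{K},a,b,\{A_v\})}{k(n)}+\frac{1}{k(n)}\sum_{v\in\widehat{K}}\varepsilon_v,
\]
which holds because inactive vertices contribute $A_v-a=0$ and a short rewriting of the definition of $\mathfrak{R}$ gives $\mathfrak{R}(K)=\sum_{v\in K,\text{ active}}(A_v-a)\ge 0$ under (\ref{3}). Fix $\epsilon>0$ and split $\mathcal{K}_{\mathcal{A}}(n)$ into $\mathcal{K}_{\text{bad}}:=\{K:\mathfrak{R}(K)\ge\epsilon k(n)/2\}$ and its complement $\mathcal{K}_{\text{good}}$. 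On $\{\widehat{K}\in\mathcal{K}_{\text{good}}\}$ the active contribution is at most $\epsilon/2$, so $\{\widehat{a}-a<-\epsilon\}$ forces the noise average on $\widehat{K}$ to exceed $\epsilon/2$ in absolute value; a Bernstein bound together with a union bound over $\mathcal{K}_{\text{good}}\subseteq\mathcal{K}_{\mathcal{A}}(n)$ controls this by $2|\mathcal{K}_{\mathcal{A}}(n)|\exp(-c_1(\epsilon) k(n))$.

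On $\{\widehat{K}\in\mathcal{K}_{\text{bad}}\}$ I would invoke Proposition~\ref{Proposition_7} with $\mathcal{K}=\mathcal{K}_{\text{bad}}$. The key observation is that the exponent in (\ref{52}), namely $3\mathfrak{R}^2/(12\sigma^2|K\setminus K_0|+4M\mathfrak{R})$, is increasing in $\mathfrak{R}$ for $|K\setminus K_0|\le k(n)$ fixed; at the threshold $\mathfrak{R}=\epsilon k(n)/2$ it is already at least $c_2(\epsilon) k(n)$, and for larger $\mathfrak{R}$---precisely the regime the adversary creates by inflating active values $A_v$---the denominator becomes dominated by $4M\mathfrak{R}$ so the exponent grows linearly in $\mathfrak{R}$ and the bound only improves. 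This monotonicity is the structural source of the claimed adversarial robustness, and yields $P(\widehat{K}\in\mathcal{K}_{\text{bad}})\le|\mathcal{K}_{\mathcal{A}}(n)|\exp(-c_2(\epsilon) k(n))$.

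Putting the pieces together gives $P(|\widehat{a}-a|>\epsilon)\le C|\mathcal{K}_{\mathcal{A}}(n)|\exp(-c(\epsilon) k(n))$ with $c(\epsilon):=\min(c_0\epsilon^2,c_1(\epsilon),c_2(\epsilon))>0$. Condition~(i) forces $c(\epsilon)k(n)/\log|\mathcal{K}_{\mathcal{A}}(n)|\to\infty$, so the right-hand side vanishes and consistency in probability follows; condition~(ii) is used in tandem with (i) to push the same estimate to a Borel--Cantelli summable bound for the almost-sure statement. The main obstacle I anticipate is precisely that the adversary can drive $\mathfrak{R}(K)$ to arbitrary values; the fix is the monotonicity-in-$\mathfrak{R}$ observation above, which ensures that a single uniform bound handles every adversary strategy simultaneously, so the same scan collection $\mathcal{K}_{\mathcal{A}}(n)$ suffices regardless of the adversary's moves.
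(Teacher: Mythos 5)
Your proposal is correct and, at its core, follows the same route as the paper's (very terse) proof: center everything at the all-inactive neighborhood $K_0$, apply Bernstein to the noise sums, and union-bound over the scan collection $\mathcal{K}_{\mathcal{A}}(n)$; condition (i) then kills the factor $|\mathcal{K}_{\mathcal{A}}(n)|$ against $\exp(-c(\epsilon)k(n))$. The paper implements this by invoking Proposition \ref{Proposition_5} with the shift $\alpha(n)=\delta k(n)$ (equivalently, bounding $P(\overline{X}_{\widehat{K}}<\overline{X}_{K_0}-\delta)$ via the pairing trick and Proposition \ref{Proposition_7}), whereas you work with $\overline{\varepsilon}_{\widehat{K}}$ and $\overline{\varepsilon}_{K_0}$ separately through the identity $\widehat{a}-a=\mathfrak{R}(\widehat{K})/k(n)+\overline{\varepsilon}_{\widehat{K}}$; the two are morally equivalent. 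Two remarks. First, your good/bad split and the entire $\mathcal{K}_{\text{bad}}$ step are redundant: since $\mathfrak{R}(K)\ge 0$ pointwise under \eqref{3}, the event $\{\widehat{a}-a<-\epsilon\}$ already forces $\overline{\varepsilon}_{\widehat{K}}<-\epsilon$ with no case distinction, and the sandwich $\min_{K\in\mathcal{K}_{\mathcal{A}}(n)}\overline{\varepsilon}_K\le\widehat{a}-a\le\overline{\varepsilon}_{K_0}$ reduces the whole theorem to two Bernstein bounds on pure noise. This is in fact the cleaner way to see adversarial robustness, because neither side of the sandwich involves the $A_v$ at all; by contrast, applying Proposition \ref{Proposition_7} to the random set $\mathcal{K}_{\text{bad}}$ is delicate, since the adversary may choose $\{A_v\}$ as a function of the noise, making $\mathfrak{R}(K)$ random and the membership $K\in\mathcal{K}_{\text{bad}}$ noise-dependent (a subtlety the paper itself glosses over). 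Second, a small slip: you invoke Assumption \ref{ass:1} for summability in the upper tail, but Theorem \ref{Theorem_2} only assumes $k(n)/\log|\mathcal{K}_{\mathcal{A}}(n)|\to\infty$; Assumption \ref{ass:1} belongs to Theorem \ref{Theorem_1}, which is obtained from Theorem \ref{Theorem_2} by taking $|\mathcal{K}_{\mathcal{A}}(n)|=n$. Neither point affects the validity of your argument for consistency in probability, which is what the paper's stated notion of consistency requires.
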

As a corollary of this general statement, we derive consistency of the estimator of the present paper. Proofs of all the results can be found in Appendix.
\begin{theorem}\label{Theorem_1}
Suppose that $k(n)$ satisfies Assumption~\ref{ass:1}. Then,
the graph scan estimator $\widehat{a} \,=\, \overline{X}_{\widehat{K}}$ is a consistent estimator of $a$, regardless of the adversary's strategy.
\end{theorem}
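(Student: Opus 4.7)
The plan is to derive Theorem \ref{Theorem_1} as a direct corollary of the master consistency theorem (Theorem \ref{Theorem_2}) by instantiating it with the specific collection $\mathcal{K}_{\mathcal{A}}(n) := \mathcal{K}_0$ from Definition \ref{Definition_KNN_Scan_Estimator}. Recall that $\mathcal{K}_0$ contains exactly one exact $k(n)$-neighborhood per vertex of $G_n$, so trivially $|\mathcal{K}_{\mathcal{A}}(n)| \leq n$.

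First I would use Assumption \ref{ass:1} to produce a candidate all-inactive neighborhood $K_0 \in \mathcal{K}_{\mathcal{A}}(n)$. By assumption, there is an inactive vertex $v^{\star}$ whose full $k(n)$-neighborhood consists entirely of inactive vertices; the algorithm designer has, by the phrasing of Definition \ref{Definition_KNN_Scan_Estimator}, the freedom to select which exact $k(n)$-neighborhood to associate to each vertex, so one can select for $v^{\star}$ an exact $k(n)$-neighborhood contained in that all-inactive full neighborhood. This gives $K_0$ with $|K_0|=k(n)$ and every vertex in $K_0$ inactive, which is the first hypothesis of Theorem \ref{Theorem_2}.

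Next I would verify the two limit conditions. From $|\mathcal{K}_{\mathcal{A}}(n)| \leq n$ we get $\log|\mathcal{K}_{\mathcal{A}}(n)| \leq \log n$, and hence
\[
\frac{k(n)}{\log |\mathcal{K}_{\mathcal{A}}(n)|} \,\geq\, \frac{k(n)}{\log n} \,\longrightarrow\, \infty
\]
by \eqref{69} in Assumption \ref{ass:1}, which yields the first limit. For the second, $|\mathcal{K}_{\mathcal{A}}(n)| \exp(-n) \leq n\,\exp(-n) \to 0$ as $n \to \infty$. Theorem \ref{Theorem_2} then delivers strong consistency of $\widehat{a} = \overline{X}_{\widehat{K}}$ uniformly over adversarial strategies.

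There is no real obstacle beyond unpacking the terminology: the one point requiring care is the argument that $K_0 \in \mathcal{K}_{\mathcal{A}}(n)$, i.e., that the all-inactive exact $k(n)$-neighborhood from Assumption \ref{ass:1} is realized as the distinguished neighborhood attached to $v^{\star}$ in the collection $\mathcal{K}_0$. The adversary's ability to alter values on active vertices does not need to be revisited here, as it is absorbed by the quantity $\mathfrak{R}(K,a,b,\{A_v\})$ in Proposition \ref{Proposition_7} and already handled internally by Theorem \ref{Theorem_2}.
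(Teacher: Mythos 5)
Your proposal is correct and follows essentially the same route as the paper: Theorem \ref{Theorem_1} is obtained by specializing Theorem \ref{Theorem_2} to the collection $\mathcal{K}_0$ of Definition \ref{Definition_KNN_Scan_Estimator}, observing that $|\mathcal{K}_{\mathcal{A}}(n)| = n$ so that Assumption \ref{ass:1} gives both limit conditions. Your extra care about realizing $K_0$ inside $\mathcal{K}_0$ is welcome but even simpler than you suggest, since every exact $k(n)$-neighborhood of $v^{\star}$ is contained in its full $k(n)$-neighborhood and is therefore automatically all-inactive under Assumption \ref{ass:1}.
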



\section{Scalable algorithms and computation}\label{Section_Optimization}
In view of the huge size of many networks of current interest, and also due to the fact that we typically do not get the chance to observe the network in its entirety but rather can only query it in small parts, we need fast and scalable algorithms to implement our estimators. Moreover, these algorithms should be provably statistically reliable and consistent. In this section we discuss some scalability aspects of our estimators; however, to avoid detracting from our primary modeling focus, we defer a thorough empirical evaluation to the future.

\paragraph{Distributed computation.}
Note that the computation of our proposed estimator shown in Algorithm \ref{alg:kNNscan} is fully decentralized, breaking up into computation of one neighborhood per graph node (Phase 1). However, there is a necessary centralized communication phase at the end that collects all averages (a single number) per neighborhood and only keeps the smallest one out of those (Phase 2). This second phase is easily implemented as a typical map-reduce operation over the nodes, the `reduce' operation being the $\min$ operator.

\begin{proposition}
  Let $k(n)$ be the neighborhood size. One can compute a $k$-NN scan estimator using $\mathcal{O}(k(n))$ operations per node in the decentralized phase (Phase 1) of Algorithm~\ref{alg:kNNscan}, followed by a single communication round to compute the minimum over the graph.
\end{proposition}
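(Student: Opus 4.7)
The plan is to decompose the computation into the local work at each node (Phase 1) and the subsequent global aggregation (Phase 2), and to bound each contribution independently.

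For Phase 1, fix a node $v \in V$. The work consists of: (i) identifying the vertices of an exact $k(n)$-neighborhood $\mathfrak{B}_{k(n)}(v)$ as defined in Section~\ref{sec:knn}; and (ii) forming the arithmetic mean of the observed values $\{X_u : u \in \mathfrak{B}_{k(n)}(v)\}$. Task (ii) requires $k(n)-1$ additions and one division, so its cost is trivially $\mathcal{O}(k(n))$. For task (i), the recursive definitions of $\mathfrak{N}_i(v)$ and $\mathfrak{B}_m(v)$ show that any breadth-first exploration starting at $v$ and halting as soon as $k(n)$ vertices have been enqueued produces a valid exact $k(n)$-neighborhood. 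Since exactly $k(n)$ vertices are visited and each visit incurs only a constant number of bookkeeping operations (enqueue, mark as seen, read the local value $X_u$), this exploration also costs $\mathcal{O}(k(n))$ per node, under the standard assumption that neighbors of a vertex can be enumerated one at a time from an adjacency-list representation. Summing (i) and (ii) yields the claimed per-node bound.

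For Phase 2, at the end of Phase 1 every node stores a single scalar, its local average $\overline{X}_{\mathfrak{B}_{k(n)}(v)} = S_{\mathfrak{B}_{k(n)}(v)}/k(n)$. Recovering $\widehat K = \arg\min_{K \in \mathcal{K}_0}\{S_K\}$ then reduces to selecting the smallest of $n$ real numbers indexed by the nodes. Because $\min$ is associative and commutative, this is a single map-reduce reduction, i.e.\ one communication round, exactly as stated. The estimator $\widehat{a}$ is then obtained at essentially no extra cost from the already-computed local average at the winning node.

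The main subtlety is step (i) in Phase 1: in a general (possibly non-sparse) graph with large degrees, an unconstrained BFS could traverse many more than $k(n)$ edges before filling a neighborhood. The argument therefore hinges on the convention that the exploration is aborted as soon as the target size $k(n)$ has been reached, guaranteeing that at most $\mathcal{O}(k(n))$ vertices are ever examined at any single node; the remainder of the proof is just the accounting above.
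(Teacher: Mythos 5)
The paper does not actually supply a formal proof of this proposition: it is asserted as-is, with the surrounding paragraphs (``All independently execute the same BFS style algorithm to build an average over the $k(n)$ sized neighborhood per node\ldots'' and the map-reduce description of Phase~2) serving as the only justification. Your write-up formalizes exactly that sketch --- truncated BFS per node, then a single $\min$-reduction --- so in spirit it is the same argument, and your treatment of Phase~2 and of task~(ii) in Phase~1 is fine.

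There is, however, a genuine gap in task~(i), and it sits precisely at the point you flag as ``the main subtlety'' without actually resolving it. Aborting the BFS once $k(n)$ vertices have been \emph{enqueued} bounds the number of discovered vertices, but not the number of adjacency-list entries scanned: every time a dequeued vertex $u$ is processed, its neighbor list must be walked entry by entry, and each entry pointing to an already-seen vertex costs a constant but discovers nothing new. The number of such wasted probes is bounded only by (twice) the number of edges internal to the explored set, which for a dense $k(n)$-vertex neighborhood is $\Theta\bigl(k(n)^2\bigr)$. Since the paper explicitly refuses any sparsity or degree assumption ($|E| \gg n$ is allowed), your claim that ``each visit incurs only a constant number of bookkeeping operations'' does not follow; the honest per-node bound for the truncated BFS is $\mathcal{O}\bigl(k(n) + m_{\widehat{K}}\bigr)$, where $m_{\widehat{K}}$ is the number of edges with both endpoints among the $k(n)$ explored vertices. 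To recover the stated $\mathcal{O}(k(n))$ you would need either a bounded-degree (or locally sparse) hypothesis, or a reinterpretation of ``operations'' as counting only value-reads and additions while treating neighbor enumeration as free --- either of which should be stated explicitly, since neither is implied by the paper's assumptions.
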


\paragraph{Parallelization.}
The main computational cost is encountered in Phase 1 of Algorithm~\ref{alg:kNNscan}. Fortunately, the computations in this phase decouple completely, enabling our algorithm to be directly applicable to huge networks. In this setting, we assume the nodes of the graph are partitioned over a set of compute agents, each responsible only for the estimators over its own nodes. In the extreme case, in an IoT setting as noted below, every graph node has its own compute agent responsible for it. All independently execute the same BFS style algorithm to build an average over the $k(n)$ sized neighborhood per node (naively needing the communication of $k(n)$ numbers back to node $v$).

In an IoT framework, we naturally have a processor built-in at each graph node. We therefore readily use those processors (that have immediate access to local observations, and communicate in small neighborhoods) for parallel estimation using Algorithm~\ref{alg:kNNscan}.

\section{Experimental results}\label{Section_Experiments}

\subsection{Simulated large networks, no adversary}

To illustrate performance of our method on a larger scale, we constructed an artificial network that consists of two subgraphs - one of which is made of a million nodes where each of these nodes is connected to 3 nodes chosen uniformly at random within this subgraph. The other subgraph consists of a thousand nodes and, in a similar way, each of these nodes is connected to 3 nodes chosen uniformly at random within this second subgraph. All the 1000 nodes in the smaller subgraph are labeled as inactive, while nodes in the bigger subgraph are labeled as active or inactive randomly with equal probability. This way, we guaranteed that there is an inactive community of size $k(n) \,=\, 1000$. Furthermore, we pick 20 random pairs of nodes - one from the bigger subgraph and the other from the smaller one and connect the two. Number of nodes in the graph is 1,001,000 and number of edges is up to 3,003,002.

Activity levels in the network are set to $a=2$ for the inactive nodes and $A_v=10$ for the active ones. These weights are corrupted by adding a Gaussian noise of mean $0$ and variance $\sigma^2 = 1$ before feeding them to the algorithm.

The estimates returned for the above example using our proposed approach are in accordance with the theory. The algorithm underestimates $a$ when we pick the value of the parameter $k$ that is too small (this value is corresponding to the size of the neighborhood). The estimate's value increases with an increased value of $k$. Halfway between the two extreme values we obtain very accurate estimates. Naturally, these values of $k$ that return the optimal estimate ($k = 500$ or $k=1000$ in this case) can also be considered suggestive of the inactive community sizes inside the network.

If we change the size of the smaller group to 10,000, we observe that the $k = 1000$ setting of the algorithm returns an estimate of $\widehat{a}=1.75$ on average over 400 experiments, implying that we have to scan with bigger neighborhoods to achieve consistent inference. Indeed, in this case $k(n)=10,000$.

The histogram in Figure 1 illustrates the distribution of the $k$-NN graph scan estimators for different values of $k$. In particular, for $k=500$ the empirical mean of the estimator is 1.91914586 with variance 0.0341272087, while for $k=1000$ we have estimated mean 2.03303998 with variance 0.031305042. The true value $a=2$ is, indeed, very close.

\begin{figure}\label{Politics_Graph_Histogram}
\centering
	\includegraphics[scale=0.06]{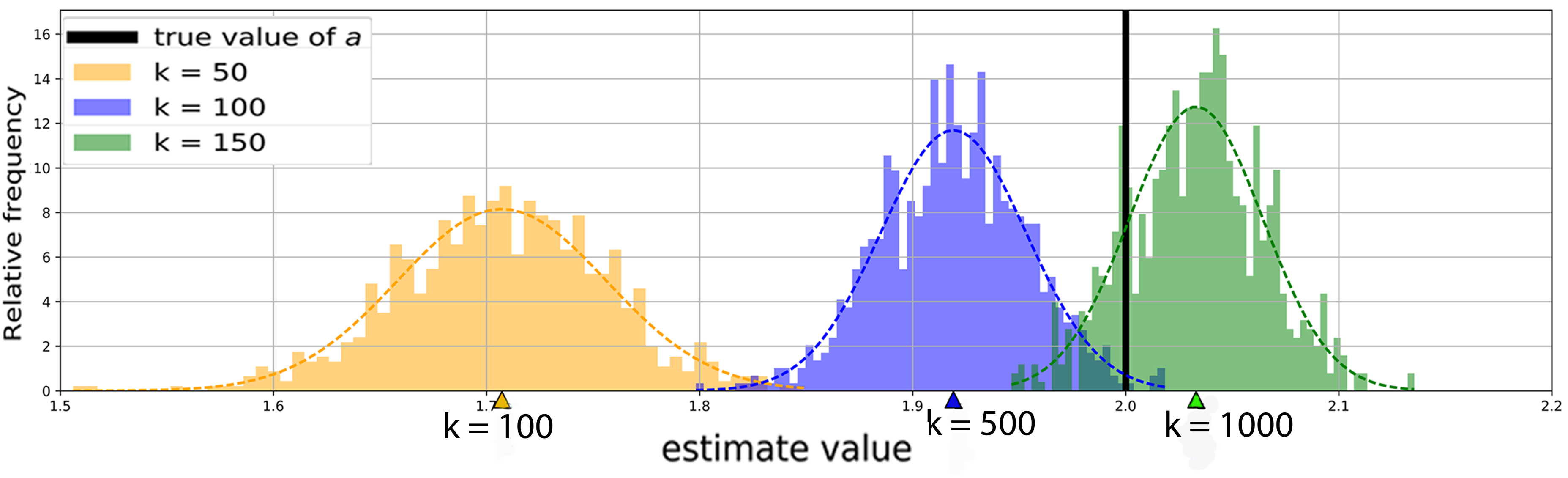}	\vspace{-3mm}
\caption{
Graph scan estimators for the artificial large graph.
}
\end{figure}

\subsection{Real network structure, no adversary}

In this example, we use the real community structure of political blogs from the 2004 U.S. Presidential Election. Paper \citep{adamic2005political} studied the linking patterns of political blogs. These blogs can be naturally classified into two classes, the liberal ones and the conservative ones. This is suitable for our framework. The number of libertarian, independent, or moderate blogs was negligible at the time.

In \citep{adamic2005political}, a description of the network of over 1000 blogs is presented, based on a single day snapshot that included blogrolls. Two blogs share an edge if one of the two cites the other one. The blogs were categorized manually. It turned out that neither directory labels relying on self-reported or automated categorizations, nor the manual labels were 100\% accurate, with an error probability that has an unknown distribution.

There were 1494 blogs in total, with 759 liberal and 735 conservative. The structure of the underlying linking graph was rather complex, as 91\% of the links originating within either the conservative or liberal communities stay within that community, but the number of intercommunity links was non-negligible as well. Figure 2 illustrates the structure of the political blogging graph.

\begin{figure}\label{Politics_Graph}
\centering
	\includegraphics[scale=0.26]{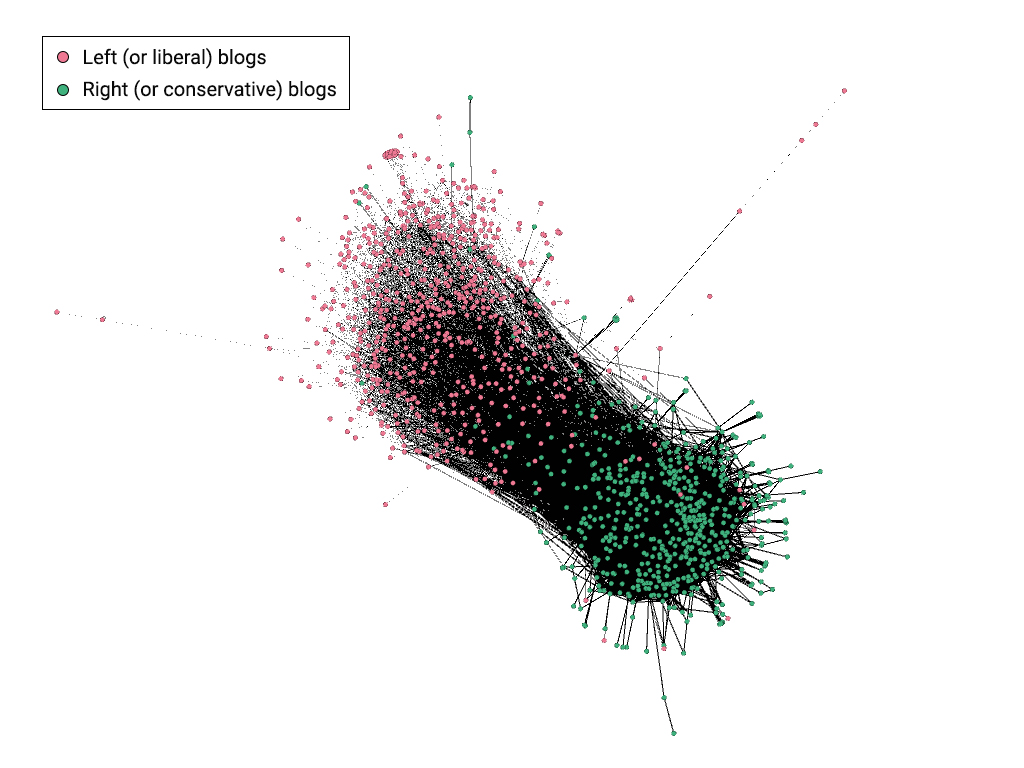}	\vspace{-3mm}
\caption{
Political blogosphere graph for the 2004 Elections.
}
\end{figure}

\begin{figure*}[t]\label{Politics_Graph_Histogram}
\centering
	\includegraphics[scale=0.35]{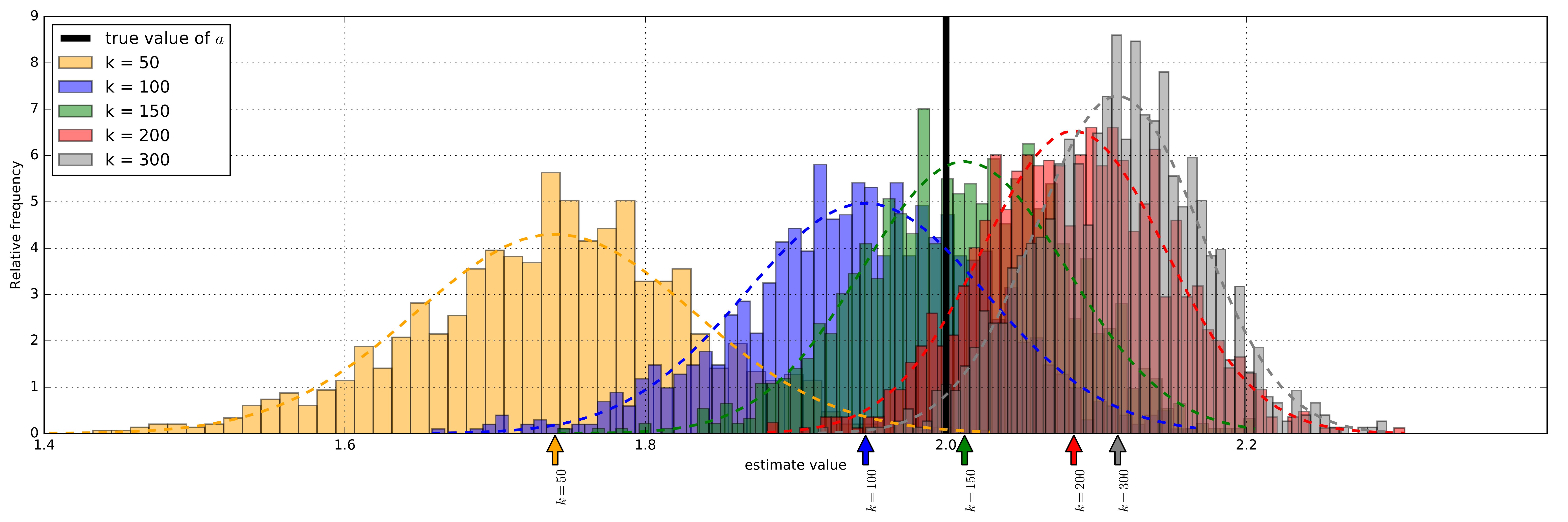}\vspace{-3mm}
\caption{
Graph scan estimators for the 2004 Elections graph.
}
\end{figure*}

To create a challenge to our method, we add an additional complication to this dataset by corrupting the values attributed to blogs by Gaussian white noise. We assigned the same activity levels as in Section 6.1. Notice that this type of noise is unbounded and so does not satisfy conditions of the consistency theorem. The graph size is also relatively small, so we are far from the asymptotic regime. However, the graph scan estimator produced surprisingly accurate results.

The histogram in Figure 3 illustrates the distribution of the $k$-NN graph scan estimators for different values of $k$. It is apparent that in this graph there is an inactive neighborhood of size close to 150, but not more. An estimator for $k=150$ is surprisingly accurate.


\subsection{Simulated large network, two types of adversaries}

In order to study the impact of an adversary on the performance of the proposed algorithm, the following experiment is designed.

Let us consider the following two scenarios. Scenario 1: the adversary decides to act locally and chooses to influence those active nodes that are present in the best neighborhood $\hat{K}$ of our choice, thus hoping to throw off our specific Algorithm.

Scenario 2: the adversary is strong and decides to use brute force approach, influencing all the active nodes in the network.

For our experiment, we assumed that when the adversary 'influences' a node, the activity level for this node is changed to a large number $10^{6}$. We then run our algorithm on the new set of altered activity levels and generate a new estimate $\hat{a}^{(2)}$ corresponding to a new 'best neighborhood' $\hat{K}^{(2)}$.

Over $100$ such experiments for the value of $k = 500$, there happen to be only $23$ cases where $\hat{K}$ obtained after first running our algorithm contains at least one active node. Hence for the remaining $77$ cases, the estimate $\hat{a}^{(2)}$ obtained after running the algorithm over the altered values by either adversary - Scenario 1 or Scenario 2, everything remains the same for the Algorithm since the new best neighborhood $\hat{K}^{(2)}$ would remain the same as the one obtained previously ($\hat{K}$). For the $23$ times where we have an active node in $\hat{K}$, the average of the estimate $\hat{a}$ when the algorithm is first run has a mean of $1.9320$ and a standard deviation of $0.0339$. When we run the algorithm after the action of the weak adversary, the estimate $\hat{a}^{(2)}$ has a mean of $1.9399$ and a standard deviation of $0.0333$. When the adversary is a strong one, our estimate $\hat{a}^{(2)}$ has a mean of $1.9423$ and a standard deviation of $0.0322$.

However for the value of $k = 1000$, we reported $97$ cases where the neighborhood $\hat{K}$ contains one or more active nodes and for the remaining $3$ we have no active nodes within $\hat{K}$. The mean of the estimates $\hat{a}$ when the algorithm runs for the first time was $2.0293$ with a standard deviation of $0.0318$, while after the influence of the weak adversary the estimates $\hat{a}^{(2)}$ had a mean of $2.0472$ and a standard deviation of $0.0438$ and in presence of the strong adversary, the estimates had a mean of  $1.9220$ and a standard deviation of $0.0354$.

This shows that our method exhibits remarkable stability against both crafty and brute force adversaries, even under conditions that are more general than the ones in our consistency theorem.


\subsection{Experiments with simulated large networks and multi-step adversaries}


In this section, we show that our method can be robust even in those cases where the adversary can act not once, but as many times as he chooses. We build a multi-step multi-stage experiment, where at each step, firstly, we generate an estimate by choosing an optimal neighborhood with our scanning algorithm and, secondly, the adversary acts to spoil our inference by influencing active nodes in the neighborhood that we just selected. This game can be repeated as many times as the adversary chooses. Notice that this is a much more general setup than we permitted in Section 2.1.

For a set of values for $k(n)$ (the size of the scanning neighborhood), we perform 100 experiments for each case. We let the adversary play as long as it made sense, and it happens that we only needed up to 10 steps. In all these cases, we always won the game, by reaching a stage where we select a neighborhood that doesn't contain any active node; therefore, after this step the algorithm is immune to adversary's influence.

The table \ref{table:1} shows the estimates returned in these experiments, and the number of steps $N_w$ required before winning against the adversary. When $N_w = 0$, the neighborhood chosen by our algorithm in the very first attempt (before adversary's influence) does not contain any active nodes and thus we win at the $0$-th step. It can be seen that as $k$ increases, the value of the estimate improves, getting close to the true value of $a\,=\,2$ for the optimal range of values $k=500$ to $k=800$, but then we often need more steps to beat the adversary. Thus choosing the parameter $k$ can be considered as facing a \emph{trade-off} between the accuracy of the estimate and the robustness against an adversary.

\begin{table}
\centering
\begin{tabular}{ |p{1.8cm}||p{0.75cm}|p{0.75cm}|p{0.75cm}||p{0.75cm}||p{0.75cm}|  }

 \hline
$k$ & 100 & 200 & 500 & 700 & 800 \\
\hline
\hline
 $\#(N_w=0)$   & 98    & 98    &78 & 56 & 55\\
 $\#(N_w=1)$   & 2     & 2     &16 & 22 & 17\\
 $\#(N_w=2)$   & 0     &  0    & 3 & 16 & 12\\
 $\#(N_w = 3)$ & 0     &  0    & 3 & 5  & 9\\
 $\#(N_w \geq 4)$   & 0     &  0    &  0      & 1     & 7\\
 \hline
 $\mu_{\hat{a}}$    & 1.705 & 1.815 &  1.913  & 1.949 & 1.961\\
 $\max{\hat{a}}$    & 1.806 & 1.903 &  1.980  & 2.030 & 2.031\\
 $\min{\hat{a}}$    & 1.603 & 1.686 &  1.824  & 1.848 & 1.879\\

 \hline
 $\sigma_{\hat{a}}$ & 0.048 & 0.040 & 0.033   & 0.034 & 0.029\\

\hline
\end{tabular}
\caption{Multi-step adversary}
\label{table:1}
\end{table}

\smallskip
\noindent {\bf Acknowledgments.} The authors would like to thank Suvrit Sra for helpful discussions. \\

\bibliographystyle{plainnat}
\bibliography{Networks_IoT_Bibliography}


\section*{Appendix}

\section{Graph scan estimators for the crawler and for active vertices}

It is possible to define scan estimators (for each of the three types discussed in this paper) that use either \emph{full} $k(n)$-neighborhoods of all vertices, or \emph{all} exact $k(n)$-neighborhoods. These estimators are consistent under rather general assumptions as well, even though the corresponding limiting distributions differ from those of the scan estimator of this paper. However, the active adversary assumption needs to be altered to ensure consistency for each of these estimators.

\subsection{Superlevel graph scans for active vertices}

For inference on active clusters we would have to assume that there exists at least one active cluster that contains a full $\varphi (n)$-neighborhood of vertices, and that
  \begin{equation}\label{68}
    \lim_{n \to\infty} \frac{\, \varphi (n) \,}{\, \log n \,} = \infty \,.
  \end{equation}

Construction of superlevel $k$-NN scan estimators for the active level ($b$ in our notation) requires the following dual assumption (\emph{cf.} \eqref{3}):
\begin{equation}\label{65}
\text{for each}\ v \in V,\ \left\{
           \begin{array}{ll}
           A_v = b > a, & \hbox{if $v$ is active;} \\
           A_v \leq a, & \hbox{if $v$ is normal.}
           \end{array}
         \right.
\end{equation}

Now the above graph scan algorithm can be inverted to get a $k$-NN scan estimator for active vertices instead. The only modification would be that this time
\begin{equation}
\widehat{K} \,:=\, \arg\max_{K(v) \subseteq V: \, |\,K(v)\,| \,=\, \varphi(n), \, v \in V} \{ S_{K(v)} \}
\end{equation}
and
\begin{equation}
\widehat{b} \,:=\, {\varphi(n)}^{-1}\sum\nolimits_{v \in \widehat{K}} Y_v \,.
\end{equation}

In view of the symmetry of the results between the sublevel and the superlevel cases, in this paper only the sublevel case is considered in details.

\subsection{Graph scans for learning the crawler's performance}

Our framework involves a third-party network crawling algorithm that provides us the data about the users of the network. We propose here a modified graph scan estimator to estimate the unknown distribution $F$ of the misclassification rate of the crawling algorithm. The difficulty here is again that we do not know which vertices are active or inactive. However, this problem is solved by a combination of the graph scan estimator with the empirical distribution function estimator.


Let $\widehat{K}$ be defined by (\ref{71}) of Definition \ref{Definition_KNN_Scan_Estimator}. A \emph{graph scan estimator for the crawler's error} distribution $F$ would be
\begin{equation}\label{Scan_Estimator_Noise_Distribution}
    \widehat{F} (t) \,:=\, \frac{\, 1 \,}{\, |\widehat{K}| \,} \sum_{v \in \widehat{K}} \mathbbm{1}_{ \{ X_v \leq t \} }  \,.
\end{equation}

The following variation gives an \emph{unbiased} consistent estimator for the noise variance $\sigma^2$:
\begin{equation}\label{Scan_Estimator_Noise_Variance}
    \widehat{\sigma}^2 \,:=\, \frac{\, 1 \,}{\, |\widehat{K}| - 1 \,} \sum_{v \in \widehat{K}} {(\, X_v - \widehat{a} \,)}^2 \,.
\end{equation}
Both scan estimators $\widehat{\sigma}^2$ and $\widehat{F} (t)$ can be easily calculated once $\widehat{a}$ is calculated. We conjecture that these are consistent estimators of $\sigma^2$ and $F$, respectively. Moreover, there is evidence that $\widehat{F}$ is uniformly $\sqrt{\, k(n) \,}$-consistent in $\| \cdot \|_{\infty}$-norm.

\section{Proofs}

In this section, we start with deriving probabilistic bounds for, and statistical properties of, sublevel graph scan estimators for the case of general unbounded noise of finite variance. Corresponding results for superlevel scan estimators are analogous. We will not study consistency of scan estimators in this general case.

Notice that in the model formulated in Section 2, potentially, $A_v \,=\, A_v ( \{ X_v \,|\, v \in V_n \}, G_n, \mathcal{A}(G_n) )$ for all active $v$, where $\mathcal{A}$ is the algorithm we use for inference, and $\mathcal{A}(G_n)$ is the collection of all the steps we would perform together with values of all the random variables that we will generate. Unlike the case of inactive vertices, this is not a collection of numbers, but rather a collection of random variables with a complex and unknown mutual dependence structure. Therefore, $\{ X_v \,|\, v \in V_n \}$ forms a collection of random variables too. We do not require this collection to be neither completely, nor pairwise independent, nor having identical marginal distributions. In fact, we allow for strong and long-distance correlations (within bounds given by (\ref{3})) between observations on active vertices.


We will use a pairing trick for random variables in the proofs below. The trick is based on the following basic lemma.

\begin{lemma}\label{Lemma_2}
Let $\{ \varepsilon_{v}\}_{i=1}^{i=2r}$, for a natural number $r \geq 1$ be a collection of independent identically distributed random variables satisfying (\ref{2}). Define for $1 \leq i \leq r$

\begin{equation}\label{Delta_Definition}
  \Delta_i \,=\, \varepsilon_i \,-\, \varepsilon_{i+r} \,.
\end{equation}

The collection of random variables $\Delta_1, \Delta_2, \ldots , \Delta_r$ is completely independent. For all $i$, it holds that $\mathbb{E}\, \Delta_i \,=\, 0$, $Var\, \Delta_i = 2\,\sigma^2$, and $\Delta_i$ is symmetric in distribution around 0. Moreover, $\Delta_i \, \sim_{d} \, F(\,\cdot \,) * (1 - F(\, - \cdot\,))$.
\end{lemma}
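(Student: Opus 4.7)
The plan is to verify the four claims in sequence, each following from an elementary property of the i.i.d. collection $\{\varepsilon_v\}_{i=1}^{2r}$. First I would handle independence of $\Delta_1,\ldots,\Delta_r$: observe that $\Delta_i$ is a measurable function of the pair $(\varepsilon_i, \varepsilon_{i+r})$, and the pairs $(\varepsilon_i, \varepsilon_{i+r})$ for $1 \leq i \leq r$ use pairwise disjoint index sets inside $\{1, \ldots, 2r\}$. Since the underlying collection is mutually independent, disjoint-index functions are mutually independent, giving complete independence of the $\Delta_i$.

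Next, the mean and variance computations are immediate from linearity and from the fact that $\varepsilon_i \perp \varepsilon_{i+r}$: $\mathbb{E}\Delta_i = \mathbb{E}\varepsilon_i - \mathbb{E}\varepsilon_{i+r} = 0$, and $\Var(\Delta_i) = \Var(\varepsilon_i) + \Var(-\varepsilon_{i+r}) = 2\sigma^2$, using condition (\ref{2}). For symmetry around $0$, I would note that $-\Delta_i = \varepsilon_{i+r} - \varepsilon_i$ is obtained from $\Delta_i$ by swapping the labels $i \leftrightarrow i+r$; since $(\varepsilon_i, \varepsilon_{i+r})$ and $(\varepsilon_{i+r}, \varepsilon_i)$ are exchangeable (both i.i.d.~with marginal $F$), the distributions of $\Delta_i$ and $-\Delta_i$ coincide, which is the definition of symmetry around $0$.

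For the convolution identity, I would first verify that if $\varepsilon \sim F$, then $-\varepsilon$ has distribution function $t \mapsto 1 - F(-t)$ at continuity points (with the standard left/right-continuous convention handled at atoms). Writing $\Delta_i = \varepsilon_i + (-\varepsilon_{i+r})$ as a sum of two independent random variables, the distribution of $\Delta_i$ is the convolution of their laws, giving $\Delta_i \sim_d F(\cdot) * (1 - F(-\cdot))$.

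The only mildly delicate point is the distributional symmetry argument, which I would be careful to phrase in terms of exchangeability of $(\varepsilon_i, \varepsilon_{i+r})$ rather than relying on any symmetry property of $F$ itself (the lemma does not assume $F$ to be symmetric). All other steps are standard manipulations of independent random variables and should require no further justification.
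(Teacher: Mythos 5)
Your proposal is correct and follows essentially the same route as the paper, which simply declares the first four assertions trivial and justifies the convolution identity via $\varepsilon_i \sim_d F(\cdot)$ and $-\varepsilon_i \sim_d 1 - F(-\cdot)$. You merely spell out the details (disjoint index sets for independence, exchangeability of the pair for symmetry, the left-continuity caveat at atoms), all of which are accurate.
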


\begin{proof}(of Lemma \ref{Lemma_2}):
The first four assertions are trivial. The last one follows since $\varepsilon_i \, \sim_{d} \, F(\,\cdot \,)$ and $-\,\varepsilon_i \, \sim_{d} \, 1 - F(\, - \cdot\,)$, where $\sim_{d}$ denotes equality in distribution.
\end{proof}

The following simple geometric lemma would be useful.

\begin{lemma}\label{Lemma_1}
Let $K_0 \,\subseteq\, V_n$ be any collection of inactive vertices, with $K_0 \,=\, k(n)$, and let $K \,\subseteq\, V_n$ be a collection of arbitrary  vertices, with $K \,=\, k(n)$. Denote by $S_1(K, n)$ a number of active vertices in $K$, i.e.

\begin{equation}\label{Active_Subset_Cardinality}
    S_1(K, n) \,=\,  | \{ v \in K \,| \, v \,\, \textit{active} \} |  \,.
\end{equation}

Then $|\,K \setminus K_0\,| \,=\, |\,K_0 \setminus K\,|\,\geq\, S_1(K, n) \,.$
\end{lemma}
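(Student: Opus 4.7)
The plan is straightforward: both conclusions follow from elementary set arithmetic together with the hypothesis that $K_0$ contains no active vertices, and I would separate the equality from the inequality.

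For the equality $|K \setminus K_0| = |K_0 \setminus K|$, I would decompose each of $K$ and $K_0$ using its intersection with the other as a disjoint union:
\begin{equation*}
K = (K \cap K_0) \sqcup (K \setminus K_0), \qquad K_0 = (K \cap K_0) \sqcup (K_0 \setminus K).
\end{equation*}
Taking cardinalities yields $|K| = |K \cap K_0| + |K \setminus K_0|$ and $|K_0| = |K \cap K_0| + |K_0 \setminus K|$. Since the hypothesis gives $|K| = |K_0| = k(n)$, cancelling the common term $|K \cap K_0|$ on both sides proves the equality.

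For the inequality $|K \setminus K_0| \geq S_1(K, n)$, I would use the defining assumption that every vertex of $K_0$ is inactive. If $v \in K$ is active, then necessarily $v \notin K_0$, and hence $v \in K \setminus K_0$. This shows that the set of active vertices in $K$ is a subset of $K \setminus K_0$, and taking cardinalities gives exactly $S_1(K, n) \leq |K \setminus K_0|$.

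There is no real obstacle here: the lemma is purely a counting identity and a containment, with no probabilistic or analytic content. Its role in the paper is as a bookkeeping device for the proof of Proposition~\ref{Proposition_7}, where one needs to relate the symmetric difference between a candidate neighborhood $K$ and the reference inactive neighborhood $K_0$ to the number of active vertices in $K$ in order to apply the Bernstein-type concentration bound on $S_K - S_{K_0}$.
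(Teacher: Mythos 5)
Your proof is correct and follows essentially the same route as the paper's: the equality comes from $|K| = |K_0|$ after cancelling the common intersection, and the inequality from the observation that active vertices of $K$ cannot lie in the all-inactive set $K_0$ and so must sit in $K \setminus K_0$. The paper's version is simply terser; your write-up fills in the same two steps explicitly.
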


\begin{proof}(of Lemma \ref{Lemma_1}):
The equality is obvious since $|\,K\,| \,=\, |\,K_0\,|$. The inequality follows since all the $S_1(K, n)$ of active vertices of $K$ must be contained in $K \setminus K_0$ anyways.
\end{proof}

The following proposition provides the main proxy for application of concentration of measure inequalities to graph scan estimators and to model selection on subgraphs. In essence, the proof is based on the pairing trick on the algebra of random variables.

\begin{proposition}\label{Proposition_5}
Assume $\{ X_v \,|\, v \in V_n \}$ satisfies the additive noise model given by (\ref{1}) and (\ref{3}), and let $\{ \varepsilon_v \}$ be an arbitrary collection of independent identically distributed random variables with mean 0. Let $K_0 \,\subseteq\, V_n$ be any collection of inactive vertices, with $K_0 \,=\, k(n)$, and let $K \,\subseteq\, V_n$ be a collection of arbitrary  vertices, with $K \,=\, k(n)$. Denote by $S_1(K, n)$ a number of active vertices in $K$. Then, for any $\alpha \,=\, \alpha (n) \in \mathbb{R}$

\begin{flalign}\label{Basic_Probability_Equation}
\noindent    & P\, (\, S_{\widehat{K}} < S_{K_0} - \alpha (n) \,) \,=\,  \\
\noindent    &P \, \Bigr(\, \frac{\, 1 \,}{\, k(n) \,}  \Bigr(\, S_1(K, n) (b - a) \,+\, \sum_{\{ v \in K \,|\, v \,\,\textit{active} \} } (A_v - b) \,+\, \alpha (n) \,\Bigr)  \,<\, \frac{\, 1 \,}{\, k(n) \,}  \,\sum_{i=1}^{|\,K \setminus K_0\,|} \Delta_i \, \Bigr) \,, \notag
\end{flalign}

\noindent where $\{ \Delta_i \}$ is an arbitrary collection of independent identically distributed random variables
with $\Delta_i \, \sim_{d} \, F(\,\cdot \,) * (1 - F(\, - \cdot\,))$.

\end{proposition}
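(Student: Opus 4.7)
The plan is to algebraically expand $S_K - S_{K_0}$, cancel the common baseline, and then apply the pairing trick of Lemma~\ref{Lemma_2} to recast the remaining noise difference as a clean iid symmetric sum. I read the $\widehat{K}$ on the left of the displayed identity as a generic $K$ to which the identity refers coordinate-wise; the adversarial setup enters only implicitly through the joint law of $\{A_v\}$ and $\{\varepsilon_v\}$.

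First I would split $S_K = \sum_{v \in K}(A_v + \varepsilon_v)$ according to activity status, using $A_v = a$ on inactive vertices and writing $A_v = b + (A_v - b)$ on active ones. This yields
\[
S_K \,=\, a \cdot k(n) \,+\, (b-a)\,S_1(K,n) \,+\! \sum_{\{v \in K \,|\, v \,\textit{active}\}}\! (A_v - b) \,+\, \sum_{v \in K} \varepsilon_v .
\]
Since $K_0$ consists entirely of inactive vertices and $|K_0| = k(n)$, we also have $S_{K_0} = a \cdot k(n) + \sum_{v \in K_0} \varepsilon_v$. The constant $a \cdot k(n)$ cancels in $S_K - S_{K_0}$, and, after removing the shared noise terms on $K \cap K_0$, the noise difference collapses to $\sum_{v \in K \setminus K_0} \varepsilon_v \,-\, \sum_{v \in K_0 \setminus K} \varepsilon_v$.

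Second, Lemma~\ref{Lemma_1} gives $|K \setminus K_0| = |K_0 \setminus K|$, so the two remaining noise sums can be matched index-by-index. Lemma~\ref{Lemma_2} then says that this paired difference is distributionally identical to $\sum_{i=1}^{|K \setminus K_0|} \Delta_i$, with the $\Delta_i$ iid, symmetric about zero, with CDF $F(\cdot) * (1 - F(-\cdot))$. Substituting this into the expansion and rearranging, the event $\{S_K < S_{K_0} - \alpha(n)\}$ is equivalent to
\[
\tfrac{1}{k(n)}\!\left[(b-a)\,S_1(K,n) \,+\! \sum_{\{v \in K \,|\, v \,\textit{active}\}}\! (A_v - b) \,+\, \alpha(n)\right] \,<\, -\tfrac{1}{k(n)} \sum_{i=1}^{|K \setminus K_0|} \Delta_i .
\]
Since each $\Delta_i$ is symmetric about zero, flipping the sign on the right does not change the probability, and this gives exactly the claimed identity.

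The main (mild) obstacle will be justifying that the distributional identity holds jointly with the possibly-random, adversary-driven values $\{A_v : v \,\textit{active}\}$ appearing on the left. Because the adversary is forbidden from observing the crawler outputs, those values are measurable with respect to a $\sigma$-algebra independent of $\{\varepsilon_v\}$; conditioning on that $\sigma$-algebra reduces the noise statement to a standard iid pairing argument, and unconditioning preserves equality of the two probabilities. Apart from this bookkeeping, the proof is a straightforward algebraic rearrangement combined with the symmetry of the $\Delta_i$.
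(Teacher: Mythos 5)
Your proposal is correct and follows essentially the same route as the paper's own proof: the same split of $S_K$ by activity status, cancellation of the common baseline $a\,k(n)$ and of the noise on $K \cap K_0$, and the same pairing trick via Lemma~\ref{Lemma_2} to turn the residual noise difference into an iid sum of $\Delta_i$'s. The only cosmetic differences are your sign convention for $\Delta_i$ (which you repair with the symmetry of $\Delta_i$ guaranteed by Lemma~\ref{Lemma_2}) and your explicit conditioning remark about the adversary-driven $\{A_v\}$, which the paper leaves implicit.
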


\begin{proof}

\begin{equation}\label{12new}
  P\, (\, S_{\widehat{K}} \,<\, S_{K_0} - \alpha (n) \,) \, =  \\
\end{equation}

\begin{eqnarray*}
   &=& P \, \Bigr(\, \sum_{v \,\in\, K \setminus \{ v \in K \,|\, v \,\,\textit{active} \}  } X_v \,+\, \sum_{\{ v \in K \,|\, v \,\,\textit{active} \}  } X_v  \,<\, \sum_{v \,\in\, K_0 } X_v \,  - \alpha (n) \,\Bigr) \,  \\
  &=& P \, \Bigr(\, \sum_{v \,\in\, K \setminus \{ v \in K \,|\, v \,\,\textit{active} \}  } (a + \varepsilon_v) \,+\, \sum_{\{ v \in K \,|\, v \,\,\textit{active} \}  } (A_v + \varepsilon_v)  \,<\, \sum_{v \,\in\, K_0 } (a + \varepsilon_v) \,  - \alpha (n) \,\Bigr)  \\
  &=& P \, \Bigr(\, \sum_{v \,\in\, K \setminus \{ v \in K \,|\, v \,\,\textit{active} \}  } (a + \varepsilon_v) \,+\, \sum_{\{ v \in K \,|\, v \,\,\textit{active} \}  } ((A_v - b) + (b - a) + (a + \varepsilon_v))  \,<\, \sum_{v \,\in\, K_0 } (a + \varepsilon_v) \,  - \alpha (n) \,\Bigr)  \\
   &=& P \, \Bigr(\, \sum_{v \,\in\, K } (a + \varepsilon_v) \,+\,  (b - a) \, S_1(K, n) \,+\, \sum_{\{ v \in K \,|\, v \,\,\textit{active} \} } (A_v - b)   \,<\, \sum_{v \,\in\, K_0 } (a + \varepsilon_v)  - \alpha (n) \,\Bigr) \\
    &=& P \, \Bigr(\, \sum_{v \,\in\, K } \varepsilon_v \,+\,  (b - a) \, S_1(K, n) \,+\, \sum_{\{ v \in K \,|\, v \,\,\textit{active} \} } (A_v - b) \,<\, \sum_{v \,\in\, K_0 } \varepsilon_v - \alpha (n) \,\Bigr) \\
    &=& P \, \Bigr(\,  (b - a) \, S_1(K, n) \,+\, \sum_{\{ v \in K \,|\, v \,\,\textit{active} \} } (A_v - b) \,+\, \alpha (n)  \,<\, \sum_{v \,\in\, K_0 } \varepsilon_v \, - \, \sum_{v \,\in\, K } \varepsilon_v \,\Bigr) \\
    &=& P \, \Bigr(\, (b - a) \, S_1(K, n) \,+\, \sum_{\{ v \in K \,|\, v \,\,\textit{active} \} } (A_v - b) \,+\, \alpha (n) \,<\, \sum_{v \,\in\, K_0 \bigcap K } 0 \, + \, \sum_{v \,\in\, K_0 \setminus K } \varepsilon_v \, - \, \sum_{v \,\in\, K \setminus K_0 } \varepsilon_v \,\Bigr) \,.
\end{eqnarray*}

%

\begin{eqnarray*}\label{12}
 &=& P \, \Bigr(\, \frac{\, S_1(K, n) (b - a) \,+\, \sum_{\{ v \in K \,|\, v \,\,\textit{active} \} } (A_v - b) \,+\, \alpha (n)\,}{\, k(n) \,}  \,<\, \frac{\, 1 \,}{\, k(n) \,}  \,\sum_{v \,\in\, K_0 \setminus K } \varepsilon_v \, - \, \frac{\, 1 \,}{\, k(n) \,}  \,\sum_{v \,\in\, K \setminus K_0 } \varepsilon_v \,\Bigr) \,.
\end{eqnarray*}

Now we perform the following pairing trick. Let us arbitrarily enumerate all pixels in $K \setminus K_0$ by numbers from 1 to $|\,K \setminus K_0\,|$. We also perform an arbitrary enumeration on $K_0 \setminus K$. Now in the equation (\ref{12}) we can regroup the difference of two sums into a single sum that consists of differences of pairs of random variables, and each pair is formed from the two noise variables with the same number. We would have $|\,K \setminus K_0\,|$ of those pairs. Call those random differences $\Delta_1, \Delta_2, \ldots , \Delta_{|\,K \setminus K_0\,|}$.

\noindent Applying Lemma \ref{Lemma_2} with $r \,=\, |\,K \setminus K_0\,|$, we continue to transform the above equations:

\begin{eqnarray}\label{13}
\nonumber  &=& P \, \Bigr(\, \frac{\, S_1(K, n) (b - a) \,+\, \sum_{\{ v \in K \,|\, v \,\,\textit{active} \} } (A_v - b) \,+\, \alpha (n)\,}{\, k(n) \,}  \,<\, \frac{\, 1 \,}{\, k(n) \,}  \,\sum_{v \,\in\, K_0 \setminus K } \varepsilon_v \, - \, \frac{\, 1 \,}{\, k(n) \,}  \,\sum_{v \,\in\, K \setminus K_0 } \varepsilon_v \,\Bigr) \\
    &=& P \, \Bigr(\, \frac{\, 1 \,}{\, k(n) \,}  \Bigr(\, S_1(K, n) (b - a) \,+\, \sum_{\{ v \in K \,|\, v \,\,\textit{active} \} } (A_v - b) \,+\, \alpha (n) \,\Bigr)  \,<\, \frac{\, 1 \,}{\, k(n) \,}  \,\sum_{i=1}^{|\,K \setminus K_0\,|} \Delta_i \, \Bigr) \,.
\end{eqnarray}

\end{proof}

In the sequel, we shall use the following general form of the Bernstein inequality (see \citep{Bernstein_1937_Inequality}).

\begin{proposition}\label{Bernstein_Inequality}
Let $X_1, \,\ldots\,, X_n$ be independent zero-mean random variables. Suppose that $|\, X_i \,| \,\leq\, M$ almost surely, for all $i$. Then for all $t \,>\, 0$

\begin{equation}\label{19}
    P\, \biggr(\, \sum_{i=1}^n \, X_i \,>\, t \,\biggr) \,\leq\, \exp \biggr( - \frac{t^2/2}{\sum_{i=1}^n \mathbb{E} X_i^2 + Mt/3 } \biggr) \,.
\end{equation}

\end{proposition}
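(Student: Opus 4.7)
The plan is to establish this via the classical Chernoff exponential moment method. For any parameter $\lambda > 0$, by Markov's inequality applied to the positive random variable $e^{\lambda \sum_i X_i}$ and by independence,
\begin{equation*}
P\Bigl(\sum_{i=1}^n X_i > t\Bigr) \;\leq\; e^{-\lambda t}\, \mathbb{E}\Bigl[e^{\lambda \sum_i X_i}\Bigr] \;=\; e^{-\lambda t} \prod_{i=1}^n \mathbb{E}\bigl[e^{\lambda X_i}\bigr].
\end{equation*}
The entire argument then reduces to (a) obtaining a sharp enough bound on each moment generating function $\mathbb{E}[e^{\lambda X_i}]$ that exploits both the zero mean and the variance, and (b) optimizing over $\lambda$ at the end.

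For step (a), the key lemma I would prove is: for any zero-mean random variable $X$ with $|X| \leq M$ a.s.\ and $\mathbb{E}[X^2] = \sigma^2$, and any $\lambda \in [0, 3/M)$,
\begin{equation*}
\mathbb{E}\bigl[e^{\lambda X}\bigr] \;\leq\; \exp\!\Bigl(\frac{\lambda^2 \sigma^2/2}{1 - \lambda M/3}\Bigr).
\end{equation*}
To show this, I would Taylor-expand the exponential, use $\mathbb{E}[X]=0$ to kill the linear term, and bound the higher moments by $|\mathbb{E}[X^k]| \leq M^{k-2}\,\mathbb{E}[X^2]$ for $k \geq 2$. The term to tame is then $\sum_{k \geq 2} \lambda^k M^{k-2} \sigma^2 / k!$, which I would control using the elementary inequality $k! \geq 2 \cdot 3^{k-2}$ for $k \geq 2$ (provable by induction), collapsing the series into a geometric one that sums to $\lambda^2 \sigma^2/(2(1-\lambda M/3))$. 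Applying $1 + x \leq e^x$ then yields the lemma.

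Substituting into the Chernoff bound and writing $V = \sum_i \mathbb{E}[X_i^2]$, the product telescopes in the exponent and gives, for $\lambda \in [0,3/M)$,
\begin{equation*}
P\Bigl(\sum_{i=1}^n X_i > t\Bigr) \;\leq\; \exp\!\Bigl(-\lambda t + \frac{\lambda^2 V/2}{1 - \lambda M/3}\Bigr).
\end{equation*}
The last step is to optimize the right-hand side over $\lambda$. I would make the explicit choice $\lambda^{*} = t/(V + Mt/3)$, which lies in $[0, 3/M)$ for every $t > 0$, and verify by direct substitution that $1 - \lambda^{*} M/3 = V/(V+Mt/3)$, so that the exponent collapses to $-t^2/(2(V+Mt/3))$, producing exactly the stated bound.

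The conceptually hardest step is the moment-generating-function lemma: one must squeeze simultaneously from the mean-zero condition (to drop the linear term), from the variance (to retain $\sigma^2$ rather than the cruder $M^2$ obtained by Hoeffding-style symmetrization), and from the a.s.\ boundedness (to make the tail of the Taylor series summable). The factorial inequality $k! \geq 2\cdot 3^{k-2}$ is what produces the specific constant $1/3$ appearing in the denominator of the final bound, and verifying that the optimizer $\lambda^{*}$ stays in the admissible range $[0, 3/M)$ for all $t > 0$ is the routine but necessary check that ties the argument together.
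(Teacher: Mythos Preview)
Your argument is the standard and correct proof of Bernstein's inequality via the Chernoff method: the moment-generating-function bound using $k!\geq 2\cdot 3^{k-2}$ and the explicit minimizer $\lambda^{*}=t/(V+Mt/3)$ are exactly right, and the verification that $\lambda^{*}<3/M$ and that the exponent collapses to $-t^{2}/(2(V+Mt/3))$ is accurate.

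The only point of comparison is that the paper does not prove this proposition at all. It is stated there as a quoted result with a citation to \citep{Bernstein_1937_Inequality} and is used as a black box in the subsequent derivations. So your proposal goes further than the paper does: you supply a complete self-contained proof where the authors simply invoke the classical reference.
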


We are ready to prove the basic cornerstone probabilistic inequality of this paper.

\begin{proposition}\label{Proposition_Basic_Inequality}
Let $K_0$ be any set of inactive vertices with $| K | \,=\, k(n)$, and let $K$ be any collection of $k(n)$ vertices. Define

\begin{equation*}
   \mathfrak{R} (K, a, b, \{Im\}) \,:=\, (b-a) \, S_1(K, n) \,+\! \sum_{\{ v \in K \,| \, v \,\, \textit{active} \}} ( A_v - b )  \,.
\end{equation*}

Then, if assumptions of Proposition \ref{Proposition_5} are satisfied and (\ref{18}) holds,

\begin{equation}\label{Basic_Inequality_Bernstein}
    P\, (\, \widehat{K} \,=\, K \,, S_{\widehat{K}} < S_{K_0} \,) \,\leq\,  \exp \biggr(\! - \frac{ 3 \, {\bigr[\mathfrak{R} (K, a, b, \{Im\})]}^2 }{12 \, \sigma^2 \,|\,K \setminus K_0\,|\, +\, 4 \, M \,\cdot \, \mathfrak{R} (K, a, b, \{Im\})} \biggr) \,. \notag
\end{equation}

\end{proposition}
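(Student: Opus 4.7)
The plan is to combine Proposition~\ref{Proposition_5} with the Bernstein inequality (Proposition~\ref{Bernstein_Inequality}) applied to the sum of the paired differences $\Delta_i$. First I would note that the event $\{\widehat{K} = K,\, S_{\widehat{K}} < S_{K_0}\}$ is contained in $\{S_K < S_{K_0}\}$, so it suffices to bound the latter. Then I would invoke Proposition~\ref{Proposition_5} with $\alpha(n) \equiv 0$, which rewrites $P(S_K < S_{K_0})$ as
\begin{equation*}
P\Bigl(\mathfrak{R}(K, a, b, \{A_v\}) \,<\, \sum\nolimits_{i=1}^{|K \setminus K_0|} \Delta_i\Bigr),
\end{equation*}
after clearing the factor $1/k(n)$ on both sides of the inequality inside the probability. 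By Lemma~\ref{Lemma_2}, the $\Delta_i$ are i.i.d., mean zero, with variance $2\sigma^2$.

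Second, I would check the hypotheses needed to invoke Bernstein's inequality. Since $|\varepsilon_v| \le M$ a.s.\ by assumption \eqref{18}, each $\Delta_i = \varepsilon_{j_i} - \varepsilon_{j_i'}$ satisfies $|\Delta_i| \le 2M$ almost surely. Moreover $\mathbb{E}[\Delta_i^2] = \Var(\Delta_i) = 2\sigma^2$, so $\sum_{i=1}^{|K \setminus K_0|} \mathbb{E}[\Delta_i^2] = 2 \sigma^2 |K \setminus K_0|$.

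Third, I would apply Proposition~\ref{Bernstein_Inequality} with $t = \mathfrak{R}(K, a, b, \{A_v\})$ (which is nonnegative, since $b > a$, $S_1(K,n) \ge 0$, and active vertices satisfy $A_v \ge b$ by \eqref{3}), the variance proxy $2\sigma^2 |K \setminus K_0|$, and the almost-sure bound $2M$ for each summand. This yields
\begin{equation*}
P\Bigl(\sum\nolimits_{i=1}^{|K \setminus K_0|} \Delta_i > \mathfrak{R}\Bigr) \,\le\, \exp\Bigl(-\frac{\mathfrak{R}^2/2}{2\sigma^2 |K \setminus K_0| + (2M)\mathfrak{R}/3}\Bigr).
\end{equation*}

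The final step is purely algebraic: multiply numerator and denominator of the exponent by $6$ to recover exactly $-\,3\mathfrak{R}^2/(12\sigma^2|K\setminus K_0| + 4M\mathfrak{R})$, which matches the target bound. I do not expect any real obstacle here: the nontrivial content has already been absorbed in Proposition~\ref{Proposition_5} (the pairing trick that produced independent symmetric $\Delta_i$'s from noise on two partially overlapping neighborhoods, eliminating the terms on $K \cap K_0$). The only mild care point is to remember that each $\Delta_i$ has the bound $2M$ rather than $M$, so that the constants in the final denominator come out as $12$ and $4$ rather than $6$ and $2$.
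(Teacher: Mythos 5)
Your proposal is correct and follows essentially the same route as the paper's own proof: bound the joint event by $P(S_{K} < S_{K_0})$, invoke Proposition~\ref{Proposition_5} with $\alpha(n)=0$, apply Bernstein's inequality (Proposition~\ref{Bernstein_Inequality}) to the i.i.d.\ differences $\Delta_i$ with $|\Delta_i|\le 2M$ and $\mathbb{E}\Delta_i^2 = 2\sigma^2$, and simplify. The constants work out exactly as you describe, and your care point about using $2M$ rather than $M$ is precisely the step the paper also emphasizes.
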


\begin{proof}(of Proposition \ref{Proposition_Basic_Inequality}):
Notice that under the assumption (\ref{18}) we have $|\, \Delta_i \,| \,\leq\, 2\,M$ and $\mathbb{E}\,\Delta_i \,=\,0$. Setting $\alpha (n) \,=\,0$ for all $n$ and applying Proposition \ref{Proposition_5}, Lemma \ref{Lemma_2}, and the Bernstein inequality in the form given by Proposition \ref{Bernstein_Inequality}, we derive 


\begin{align}\label{Basic_Probability_Equation}
  P\, (\, \widehat{K} \,=\, K \,, S_{\widehat{K}} < S_{K_0} \,) \,\leq\,    & P\, (\, S_{\widehat{K}} < S_{K_0} \,) \,=\,  \\
    &P \, \Bigr(\, \frac{\, 1 \,}{\, k(n) \,}  \Bigr(\, S_1(K, n) (b - a) \,+\, \sum_{\{ v \in K \,|\, v \,\,\textit{active} \} } (A_v - b) \Bigr)  \,<\, \frac{\, 1 \,}{\, k(n) \,}  \,\sum_{i=1}^{|\,K \setminus K_0\,|} \Delta_i \, \Bigr)  \notag
\end{align}

\begin{eqnarray}\label{21}
\nonumber   &=& \exp \, \biggr( -\, \frac{\, \Bigr(\, S_1(K, n) (b - a) \,+\, \sum_{\{ v \in K \,|\, v \,\,\textit{active} \} } (A_v - b) \Bigr)^2 \,}{\,2\,} \\
\nonumber & & \quad\quad\quad \,:\, \biggr(\sum_{i=1}^{|\,K \setminus K_0\,|} \mathbb{E}\, \Delta_i^2 \,+\, \frac{\,2 \, M \, \Bigr(\, S_1(K, n) (b - a) \,+\, \sum_{\{ v \in K \,|\, v \,\,\textit{active} \} } (A_v - b) \Bigr)\,}{\,3\,} \,\biggr)\biggr) \\
\nonumber    &=& \exp \, \biggr( -\, \frac{\,\Bigr(\, S_1(K, n) (b - a) \,+\, \sum_{\{ v \in K \,|\, v \,\,\textit{active} \} } (A_v - b) \Bigr)^2 \,}{\,2\,} \\
\nonumber & & \quad\quad\quad \,:\, \biggr(\sum_{i=1}^{|\,K \setminus K_0\,|}  Var\,\Delta_i \,+\, \frac{\,2 \, M \, \Bigr(\, S_1(K, n) (b - a) \,+\, \sum_{\{ v \in K \,|\, v \,\,\textit{active} \} } (A_v - b) \Bigr) \,}{\,3\,} \,\biggr)\biggr) \\
\nonumber   &=& \exp \, \biggr( -\, \frac{\, \Bigr(\, S_1(K, n) (b - a) \,+\, \sum_{\{ v \in K \,|\, v \,\,\textit{active} \} } (A_v - b) \Bigr)^2 \,}{\,2\,} \\
\nonumber  & & \quad\quad\quad \,:\, \biggr(\, |\,K \setminus K_0\,| \cdot 2 \sigma^2 \,+\, \frac{\,2 \, M \, \Bigr(\, S_1(K, n) (b - a) \,+\, \sum_{\{ v \in K \,|\, v \,\,\textit{active} \} } (A_v - b) \Bigr)\,}{\,3\,} \,\biggr)\biggr) \\
    &=& \exp \, \biggr( -\, \frac{\, 3\, \Bigr(\, S_1(K, n) (b - a) \,+\, \sum_{\{ v \in K \,|\, v \,\,\textit{active} \} } (A_v - b) \Bigr)^2 \,}{\, 12\, \sigma^2 \,|\,K \setminus K_0\,|\, +\, 4 M \cdot \Bigr(\, S_1(K, n) (b - a) \,+\, \sum_{\{ v \in K \,|\, v \,\,\textit{active} \} } (A_v - b) \Bigr) \,} \biggr) \,.
\end{eqnarray}

\end{proof}

\begin{proof} (Proposition \ref{Proposition_7})
Follows from Proposition \ref{Proposition_Basic_Inequality} by bounding the probability of the union of individual model selection events.
\end{proof}

\begin{proof} (Theorems \ref{Theorem_2} and \ref{Theorem_1})
Applying Propositions \ref{Proposition_5}, \ref{Proposition_7}, \ref{Proposition_Basic_Inequality} and \ref{Bernstein_Inequality} for $\alpha (n) \,=\, \delta \, k(n)$ and $M$ replaced by $2\,M$ directly leads to the consistency statement of Theorem \ref{Theorem_2}. Noticing that in the statement of Theorem \ref{Theorem_1} it holds that $|\mathcal{K}_{\mathcal{A}}(n)| \,=\, n$, we derive Theorem \ref{Theorem_1} from Theorem \ref{Theorem_2}.
\end{proof}

\end{document}